\newcommand{\reas}{mcat}
\newcommand{\NA}{\text{NA}}
\newcommand{\ours}[1]{M-GAM$^{#1}$}
\DeclareMathOperator*{\argmin}{arg\,min}
\theoremstyle{plain}
\newtheorem{theorem}{Theorem}[section]
\newtheorem{proposition}[theorem]{Proposition}
\newtheorem{corollary}[theorem]{Corollary}
\theoremstyle{definition}
\newtheorem{definition}[theorem]{Definition}
\theoremstyle{remark}
\theoremstyle{definition}
\begin{document}

\title{Interpretable Generalized Additive Models for Datasets with Missing Values}




\author{%
  Hayden McTavish\thanks{These authors contributed equally to this work.} \\
  Department of Computer Science\\
  Duke University\\
  Durham, NC 27705\\
  \texttt{hayden.mctavish@duke.edu} \\
  \And
  Jon Donnelly*\\
  Department of Computer Science\\
  Duke University\\
  Durham, NC 27705\\
  \texttt{jon.donnelly@duke.edu} \\
  \And 
  Margo Seltzer\\
  Department of Computer Science\\
  University of British Columbia\\
  Vancouver, BC V6T 1Z4\\
  \texttt{mseltzer@cs.ubc.ca}
  \And
  Cynthia Rudin\\
  Department of Computer Science\\
  Duke University\\
  Durham, NC 27705\\
  \texttt{cynthia@cs.duke.edu} \\
}

\maketitle




\begin{abstract}
Many important datasets contain samples that are missing 
one or more feature values.
Maintaining the interpretability of machine learning models in the presence of such missing data is challenging. 
Singly or multiply imputing missing values complicates the model's mapping from features to labels. On the other hand, reasoning on indicator variables that represent missingness introduces a potentially large number of additional terms, sacrificing sparsity.
We solve these problems with \ours{},
a sparse, generalized, additive modeling approach that incorporates missingness indicators and their interaction terms while maintaining sparsity through $\ell_0$ regularization. 
We show that \ours{} provides similar or superior accuracy to prior methods while significantly improving sparsity relative to either imputation or na\"ive inclusion of indicator variables.
\end{abstract}


\section{Introduction}
\label{Introduction}
Interpretability is essential for a wide range of  machine learning applications \cite{rudin2022interpretable}. 
Missing data pose a challenge to interpretability, because many simple models (e.g., linear models) are not well-defined when data are missing.
This raises the question: how can interpretability be maintained for datasets with missing values?

We introduce an interpretable model class, M-GAM, that extends Generalized Additive Models (GAMs) to handle missing data.
GAMs take the form of a linear combination of univariate component functions, with one function corresponding to each feature; this univariate nature is the core reason for their interpretability \citep{rudin2022interpretable}.
\begin{figure}
    \centering
    \includegraphics[width=1.0\textwidth]{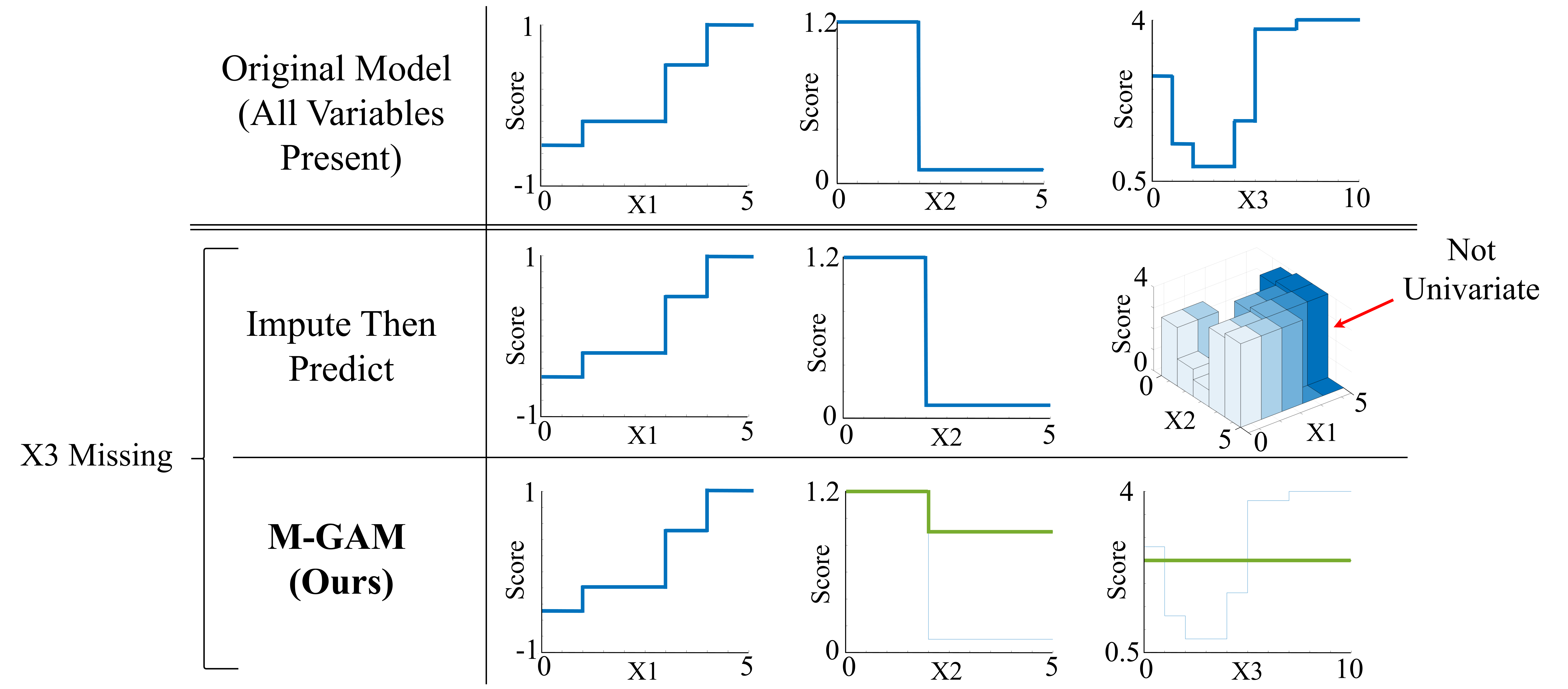}
    \caption{A comparison of how GAMs that use underlying imputation (middle row) and M-GAMs (bottom row) behave when a feature is missing. 
    \textbf{Top}: When no data are missing, the overall output logit for both models is the sum of three univariate shape functions. \textbf{Middle}: When $X3$ is missing, it is imputed as $X3 = X1 + 2X2$, producing a 3D shape function that is difficult to understand. 
    \textbf{Bottom}: M-GAM uses simple adjustments to existing univariate shape curves when $X3$ is missing (using the green curves instead of the light blue ones), making its reasoning process simple to follow. If the data were more than 3 dimensional, we would not be able to visualize the model with imputation, but M-GAM would still be easily visualized.}
    \label{fig:impu_breaks_gam}
\end{figure}
We introduce two sets of boolean variables for each feature. The first consists of missingness indicators that identify which features have missing values. The second consists of missingness adjustment terms that adjust the shape curves for other features for each missing features in a sample.
This \textit{maintains our ability to view a GAM as a sum of univariate shape functions even when modeling interactions with missing data}. 
As such, an M-GAM is much simpler to interpret than a GAM built on imputed data, since it avoids creating multivariate features as happens when imputing features from multiple others. This is illustrated in Figure \ref{fig:impu_breaks_gam}.

M-GAMs explicitly encourage sparsity.
This reduces overfitting, which has been identified as a concern in prior work using missingness indicators \citep{van2023missing}, since realistic data may produce an overwhelming number of missingness indicators. Unlike prior methods that leverage missingness indicators, we use $\ell_0$ regularization \citep{liu2022fast, JMLR:v22:19-1049, doi:10.1287/opre.2019.1919, hazimeh2022l0learn}, which directly optimizes for sparser, more interpretable models. Our ability to create sparse models allows us to include not only simple missingness indicators but also combinations of missingness indicator variables with M-GAM's missingness adjustment terms, without overfitting. Figure \ref{fig:reference model} illustrates an M-GAM fit on real data.

With this modeling framework, we make the following contributions: 
    (1) We introduce \ours{}, a form of sparse generalized additive model that incorporates missingness directly into its reasoning.
    (2) We show that \ours{} provides substantial performance benefits relative to impute-then-predict models when synthetic missing-at-random (MAR) missingness is added to real datasets, while maintaining performance in real world settings with only naturally occurring missingness.
    (3) We show that \ours{} substantially reduces runtime relative to impute-then-predict methods built on multiple imputation while producing sparse, interpretable models.

\begin{figure*}
    \centering
    \includegraphics[width=1.0\textwidth]{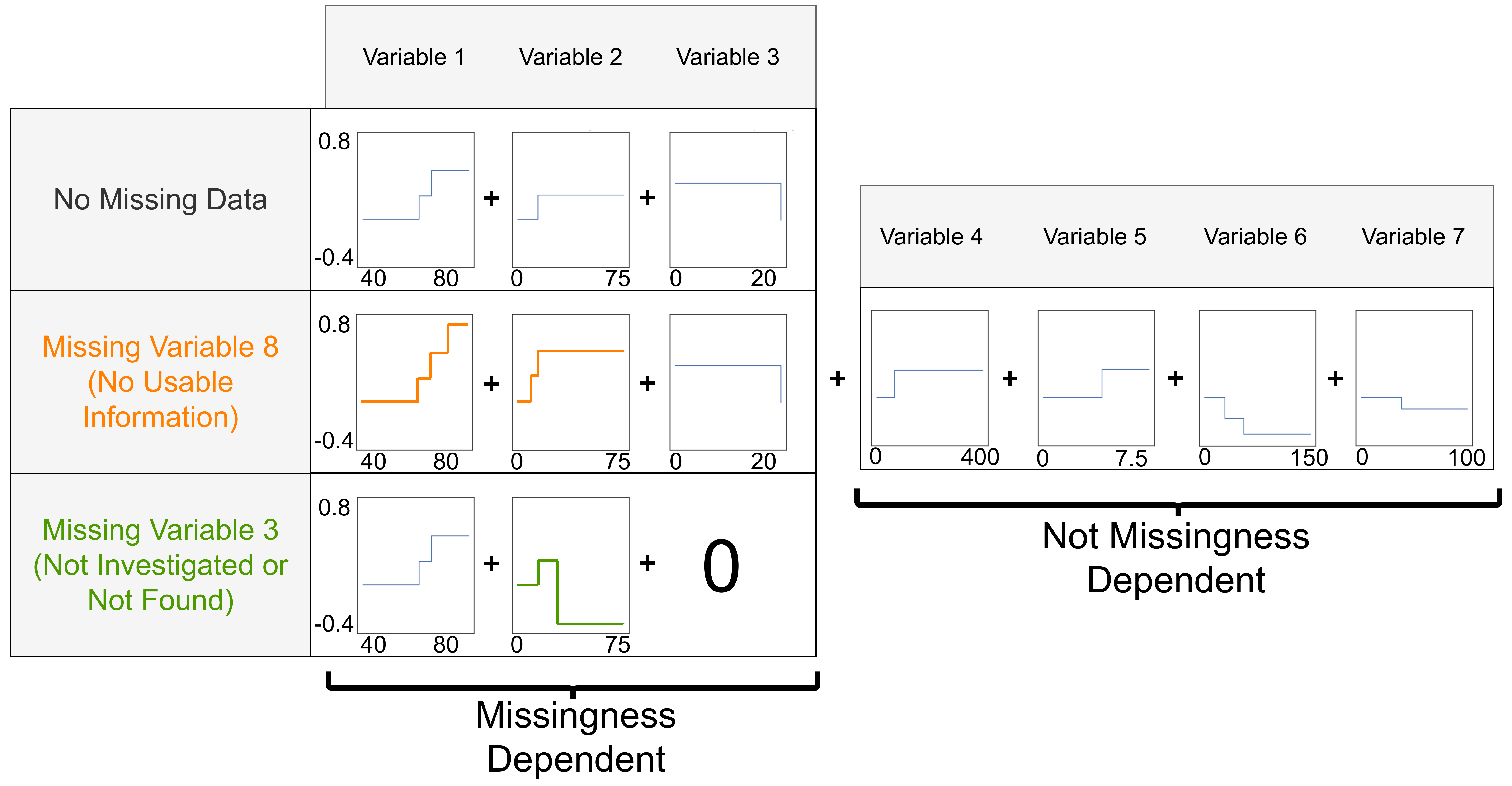}
    \caption{A generalized additive model (GAM) for the Explainable ML Challenge data from \citet{competition} with missingness incorporated. This model handles missingness interpretably by explicitly providing alternative shape functions when a variable is missing. For example, in this model the shape function for variable 2 is adjusted when variable 3 is missing, and the shape function for variable 3 is removed. This model achieves comparable performance to  convoluted black box approaches (such as random forests and/or MICE), but provides global interpretability (the entire model can easily be inspected) and local interpretability (the shape functions applied for a given sample can be easily visualized). An expanded version of this figure with variable names can be found in Appendix Figure \ref{fig:app-expanded_fig_2}.
    Shape functions in the right section are shared across all missing variable combinations.
    The type of missingness is indicated in parentheses next to the missing variable.
     Section \ref{sec:app_extra_viz} visualizes additional \ours{}s.}
    \label{fig:reference model}
\end{figure*}

\section{Related Work}
\label{related_work}
Missing data is a well studied problem in statistics. Traditionally, mechanisms by which data can be missing are organized into three categories: missing completely at random (MCAR), where missingness is independent of the value of all covariates; missing at random (MAR), where missingness in a variable $X_1$ is conditionally independent of the value of $X_1$ given all other variables; and missing not at random (MNAR), where missingness may depend on any variable \citep{little2019statistical}.

For supervised learning, there are two common approaches to dealing with missing data: impute-then-predict, in which a standard machine learning model is fit on top of imputed data and used for prediction, and incorporating missing data handling directly in the predictive model.

A broad body of work studies imputation, particularly in the MAR setting -- for a more thorough review, see  \citet{shadbahr2023impact}. Imputation methods can broadly be sorted into single imputation methods \citep[see][for a review of such methods]{van2018flexible}, where each missing value is imputed once, and multiple imputation \citep{rubin1988overview, van1999flexible, schafer2002missing, stekhoven2012missforest, mattei2019miwae}, where many alternative imputations are provided for each missing value. 
Multiple imputation 
is convenient because it integrates uncertainty into its imputations by providing a range of alternatives \citep{van2018flexible}.

Recent approaches directly incorporate missing data in the predictive model. \citet{le2020linear} showed that, even when the target of prediction is a linear function, in the presence of missing data, the optimal model need not be linear in the original features. Rather, their optimal model was linear in the observed data \textit{and} interactions between indicators for missing data and the observed data. \citet{van2023missing} showed that when missingness contains information about an outcome, linear models that directly include missingness indicators outperform models excluding this information. This inclusion of missingness indicators is especially recommended in the practical setting of predictive modeling, rather than the setting of statistical inference where MCAR, MAR and MNAR concepts are more commonly used \citep{sperrin2020missing}. 

There are also a wide variety of ad-hoc methods for handling missing data in tree-based models \citep{kapelner2015prediction, twala2008good, beaulac2020best, therneau1997introduction} and boosting models \citep{wang2010boosting, chen2016xgboost}
 that involve prediction without explicit imputation.
\cite{beaulac2020best} learn decision trees which avoid splitting on missing data when missingness follows a deterministic structure based on other known features. This sidesteps any need to query features when data is missing, but does not generalize to settings with less structured missingness.
More generally, tree-based models can learn a branch direction for each split to use when the queried feature is unknown. This effectively imputes the response to the query, but keeps the model itself simple. This is the method used in XGBoost \citep{chen2016xgboost} and SKLearn's decision tree classifier \citep{sklearn}. 
\cite{twala2008good} and \cite{kapelner2015prediction} additionally incorporate the option to split on missingness itself, effectively encoding missingness as a value. \cite{ding2010investigation} provides empirical support for incorporating this option to treat missingness as a value. We compare to XGBoost and SKLearn in our experimental section, finding that \ours{} better balances interpretability and performance, even when we allow missingness indicator splits like in \cite{twala2008good, kapelner2015prediction, ding2010investigation, wang2010boosting}.
Most similarly to our own approach, 
\cite{therneau1997introduction} and \cite{breiman2017classification} discuss an alternative approach of surrogate splits: when a feature that is split on is missing, a set of other splits is used in place of the missing feature to evaluate the split. This practice of adjusting which features are used when one feature is missing bears some similarities to our use of missingness interaction splits that adjust the shape functions for some features when other features are missing, though these surrogate splits do not optimize for sparsity like \ours{}, and underperform more standard multiple imputation approaches \cite{valdiviezo2015tree, feelders1999handling}. 
Further work explores the idea of developing distinct models for use under different cases of missing features \citep{fletcher2020missing, stempfle2023sharing} or developing additive logical models with disjunctions, such that reliance on imputed values is low \cite{stempfle2024minty}.

It is critical to note that, for a dataset with $d$ features, adding indicators for missing data results in $2d$ features, and adding first order interactions between features and missingness results in $d(d-1) + 2d$ features. 
As such, without careful regularization, these models that explicitly handle missingness are complex and uninterpretable. This poses a challenge for their application in high stakes domains such as justice and medicine, where there have been calls to enshrine interpretability as a requirement for the use of machine learning methods \citep{fda_resolution, eur_resolution}. In contrast, \ours{} provides sparse, transparent models that handle missingness indicators and interactions by extending sparse generalized additive models  \citep{liu2022fast}. M-GAM provides an expressive model class for handling missingness while controlling the exploding number of missingness interaction terms through $\ell_0$ regularization.



\section{Methodology}
\label{Methods}
We denote a dataset of $n$ samples by $\mathbf{D} = (\mathbf{X}, \mathbf{y}) = \{(\mathbf{x}_i, y_i)\}_{i=1}^n,$ where $\mathbf{x}_i \in (\mathbb{R} \cup \{\NA\})^d$ is a $d$-dimensional vector of features, $\NA$ denotes a missing entry, and $y_i \in \{0, 1\}$ is our target label. We use $x_{i,j}$ to denote the $j$-th feature of the $i$-th sample. 
We use bold capital letters ($\mathbf{X}$) to denote matrices, bold lowercase letters to denote vectors ($\mathbf{x}_i$), capital letters to denote random variables ($X$), and lowercase letters to represent scalars ($x_{i,j}$). $\varepsilon$ denotes noise; any other Greek characters denote model parameters. We encode all binary comparisons to the value $\NA$ as $0.$ That is, we follow the convention that $\mathbf{1}_{[\NA \leq a]} = 0$ for any value $a \in \mathbb{R}$, where $\mathbf{1}_{[\cdot]}$ denotes the indicator function.

Note that, in practice, data are often missing for distinct yet identifiable reasons; for example, a measurement for one sample may be missing because it was never taken, while another may be missing because a researcher spilled coffee on the notes containing the data. As such, we explicitly consider distinct reasons for missing data. For a dataset with $c \in \mathbb{N}$ potential reasons for data to be missing, define the mapping $\reas: \mathbb{R}\cup \{\NA\} \to \{0, 1, \hdots, c\}$ to map from an entry of $\mathbf{X}$ to a natural number indicating the reason that entry is missing ($0$ if the entry is not missing). 

With notation established, we begin with a motivational proposition. 
Proposition \ref{thm:impute_can_be_worse} states that even if we can perfectly impute missing values, we may find greater predictive power by using missingness itself as a feature rather than by imputing missing values.

\begin{proposition}
\label{thm:impute_can_be_worse}
    Let $I:(\mathbb{R} \cup \text{NA})^{d} \to \mathbb{R}^{d}$ be an oracle imputation function that replaces all missing values in a vector with the correct non-missing entry. For a random variable $X \in \mathbb{R}^d$, let $f_1(X):=\mathbf{1}_{[\mathbb{E}[Y|I(X)]>0.5]}$ be the Bayes' optimal model using perfectly imputed data and $f_2(X):=\mathbf{1}_{[\mathbb{E}[Y|X]>0.5]}$ be the Bayes optimal model using missingness as a value. There exist data generating processes for $X$ and $Y$ where $P(Y = f_1(X)) < P(Y = f_2(X)).$
\end{proposition}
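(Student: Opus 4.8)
The plan is to exhibit a single explicit data-generating process, since the claim is an existence statement. The guiding intuition is that the oracle imputation $I$ is a lossy map: by replacing every $\NA$ with its true underlying value, it collapses missing and observed entries into the same point in $\mathbb{R}^d$ and thereby erases the missingness pattern entirely. Whenever the event ``this entry was missing'' carries information about $Y$ that is not recoverable from the imputed value, $f_2$ (which conditions on the raw $X$, including the $\NA$ token) can exploit it while $f_1$ (which conditions only on $I(X)$) cannot. So the goal is to design a process in which the latent feature value is uninformative about $Y$, but the occurrence of missingness is highly informative.

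Concretely, I would take $d = 1$ and introduce a latent true value that is constant, say $X^\star \equiv 0$, together with an independent missingness event $M \sim \mathrm{Bernoulli}(p)$ for some fixed $p \in (0,1)$. The observed feature is $X = \NA$ when $M = 1$ and $X = 0$ when $M = 0$, so the oracle recovers $I(X) = 0$ in every case. I would then define the label by $Y := M$ (or, for a non-degenerate variant, draw $Y \mid M$ from a Bernoulli whose parameter is bounded away from $1/2$ and depends on $M$). This makes missingness a perfect (or near-perfect) predictor of $Y$, while the imputed value is a constant that reveals nothing.

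The two Bayes-optimal models then separate cleanly. Since $I(X) = 0$ almost surely, $\mathbb{E}[Y \mid I(X)] = \mathbb{E}[Y] = p$ is constant, so $f_1$ is a constant predictor attaining accuracy $P(Y = f_1(X)) = \max(p, 1-p) < 1$ for $p \in (0,1)$. By contrast $\mathbb{E}[Y \mid X = \NA] = 1$ while $\mathbb{E}[Y \mid X = 0] = 0$, so $f_2(X) = \mathbf{1}_{[X = \NA]} = M = Y$ and attains accuracy $P(Y = f_2(X)) = 1$. This yields $P(Y = f_1(X)) < P(Y = f_2(X))$, as required.

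The main thing to get right --- rather than a genuine obstacle --- is ensuring the construction respects the setup: the oracle imputation must be well-defined, which requires positing a true latent value (here the constant $0$) that $I$ can legitimately return even on missing entries, and the conditioning for $f_1$ versus $f_2$ must be taken over the correct information (the collapsed value $I(X)$ against the full $X$). I would additionally spell out the near-degenerate variant so that the phenomenon is clearly not an artifact of $Y$ being a deterministic function of $M$, and note that the example extends immediately to arbitrary $d$ by padding with uninformative, always-observed coordinates.
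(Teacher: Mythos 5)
Your construction is correct and establishes the existence claim. Since $I(X)=0$ almost surely, $\mathbb{E}[Y\mid I(X)]=\mathbb{E}[Y]=p$ is constant, so $f_1$ has accuracy $\max(p,1-p)<1$, while $f_2$ reads off $M$ from the $\NA$ token and attains accuracy $1$; the oracle is well-defined because you posit the latent value explicitly. This is the same underlying idea as the paper's proof --- missingness depends on $Y$, and perfect imputation destroys that signal --- but your instantiation is deliberately degenerate, whereas the paper's is not. The paper takes $X_1,X_2\sim\mathrm{Bernoulli}(0.5)$, $Y=|X_1X_2-\varepsilon_1|$ with $\varepsilon_1\sim\mathrm{Bernoulli}(k_1)$, and $M=|Y-\varepsilon_2|$ with $\varepsilon_2\sim\mathrm{Bernoulli}(k_2)$, $k_2<k_1<0.5$, so that the features are genuinely predictive (the imputation model already achieves accuracy $1-k_1$) and the missingness indicator is merely a \emph{less noisy} channel to $Y$ than the features themselves, giving $f_2$ accuracy at least $1-k_2>1-k_1$. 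What your version buys is brevity and transparency; what the paper's version buys is the demonstration that the gap does not hinge on the feature being vacuous or on $Y$ being a deterministic function of $M$ --- a concern you anticipate with your ``near-degenerate variant,'' and one the paper takes seriously enough to include a second alternative construction in which missingness is rare (25\%) and noisier than every other variable yet still beats imputation. If you keep your construction, you should carry through the non-degenerate Bernoulli variant explicitly rather than leaving it as a remark, since otherwise a reader may object that the example only works because the covariate contains no information at all.
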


Section \ref{sec:app_extra_examples} of the appendix provides a proof by construction for Proposition \ref{thm:impute_can_be_worse}. 
The key insight behind Proposition \ref{thm:impute_can_be_worse} is that, when missingness is dependent on the label $Y,$ missingness itself can be a powerful predictor of the label   \citep[this setting is called informative missingness in][]{van2023missing}. In particular, we can gain information about our label that is not available in other covariates (e.g., information from $\varepsilon_1)$. 

Proposition \ref{thm:impute_can_be_worse} may appear to conflict with Theorem 3.1 of \citet{le2021sa}, which states that a Bayes optimal model may be produced using impute-then-predict with almost any imputation model. This theorem hinges on the idea that, for most imputations, it is still possible to distinguish imputed data entries from non-missing entries. This is not the case for perfect imputation, which yields Corollary \ref{cor:opt_imp_not_opt_class}.

\begin{corollary}
\label{cor:opt_imp_not_opt_class}
    Let $\mathcal{R}(f, \mathbf{X}, \mathbf{y})$ denote the risk of a model $f$ for data $\mathbf{X}, \mathbf{y}$, and $\mathcal{R}^*$ the optimal risk. Let $I:(\mathbb{R} \cup \text{NA})^{d} \to \mathbb{R}^{d}$ denote the oracle imputation function of Proposition \ref{thm:impute_can_be_worse}. Under perfect imputation, it is possible for there to be no Bayes optimal model built on imputed data.
    That is,
    $$
    \exists (\mathbf{X}, \mathbf{y}) \left[\nexists f : \mathcal{R}(f \circ I, \mathbf{X}, \mathbf{y}) = \mathcal{R}^*\right].
    $$
\end{corollary}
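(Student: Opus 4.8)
The plan is to derive the corollary directly from Proposition~\ref{thm:impute_can_be_worse}, by showing that the \emph{best} model that factors through the oracle imputation $I$ is precisely the Bayes classifier on imputed data, $f_1$, and that Proposition~\ref{thm:impute_can_be_worse} already certifies this model to be strictly suboptimal relative to the attainable Bayes risk.

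First I would fix $\mathcal{R}$ to be the $0$--$1$ risk, so that for any classifier $g$ we have $\mathcal{R}(g) = P(Y \neq g(X)) = 1 - P(Y = g(X))$, and the optimal risk $\mathcal{R}^*$ is attained by the Bayes-optimal predictor over all measurable functions of $X$. Since $f_2(X) = \mathbf{1}_{[\mathbb{E}[Y\mid X] > 0.5]}$ is exactly that predictor and is allowed to read $\NA$ as a value, we have $\mathcal{R}^* = \mathcal{R}(f_2)$. This fixes the target risk against which imputation-based models will be compared.

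The key step is to characterize the infimum of $\mathcal{R}(f \circ I)$ over all $f$. Any model built on imputed data has the form $f \circ I$ for some measurable $f : \mathbb{R}^d \to \{0,1\}$, and because $I(X)$ is a deterministic function of $X$, every such $f \circ I$ is measurable with respect to the coarser $\sigma$-algebra $\sigma(I(X)) \subseteq \sigma(X)$; perfect imputation is exactly the degenerate case that collapses a missing entry and a genuinely observed entry of the same value to the same point, erasing the missingness signal that $f_2$ exploits. By the standard optimality of thresholding the conditional mean, the minimum $0$--$1$ risk among all functions of $I(X)$ is achieved by $f_1 = \mathbf{1}_{[\mathbb{E}[Y\mid I(X)] > 0.5]}$, so $\inf_f \mathcal{R}(f \circ I) = \mathcal{R}(f_1)$, with the infimum attained. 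I expect this to be the main obstacle: one must argue cleanly that $f_1$ is genuinely optimal among \emph{all} predictors factoring through $I$ (not merely a convenient heuristic), and pin down that it is precisely perfect imputation that makes this factorization strictly lossy. This is also what distinguishes the corollary from Theorem~3.1 of \citet{le2021sa}, whose construction depends on imputed entries remaining distinguishable from observed ones.

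Finally, I would invoke Proposition~\ref{thm:impute_can_be_worse} to select a data generating process with $P(Y = f_1(X)) < P(Y = f_2(X))$, equivalently $\mathcal{R}(f_1) > \mathcal{R}(f_2) = \mathcal{R}^*$. Combining this with the previous step, every imputation-based model obeys $\mathcal{R}(f \circ I) \geq \mathcal{R}(f_1) > \mathcal{R}^*$, so none attains $\mathcal{R}^*$. This exhibits the required $(\mathbf{X}, \mathbf{y})$ and establishes the existential claim $\exists (\mathbf{X}, \mathbf{y})\,[\nexists f : \mathcal{R}(f \circ I, \mathbf{X}, \mathbf{y}) = \mathcal{R}^*]$.
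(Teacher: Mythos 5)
Your proposal is correct and follows essentially the same route as the paper: both reduce the corollary to Proposition~\ref{thm:impute_can_be_worse} by observing that $f_1$ minimizes risk among all models factoring through $I$, so $\mathcal{R}(f\circ I)\geq\mathcal{R}(f_1)>\mathcal{R}(f_2)\geq\mathcal{R}^*$ for every $f$. The only difference is that you spell out why $f_1$ is optimal among predictors measurable with respect to $\sigma(I(X))$ (via Bayes thresholding), a step the paper treats as definitional by writing $f_1=\argmin_f\mathcal{R}(f,I(X),Y)$.
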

Corollary \ref{cor:opt_imp_not_opt_class} states that perfectly imputing missing data can reduce the best possible performance of a predictive model. This has substantial implications for how imputation is understood for prediction: if perfect imputation is achieved, then impute-then-predict models sacrifice expressiveness. If imputation is optimized to maintain the downstream performance of impute-then-predict models, the imputed data loses some of its meaning since it is no longer our ``best guess'' of the missing data's value, as we need to deliberately avoid perfect imputation to guarantee that we maintain performance.

Motivated by Proposition \ref{thm:impute_can_be_worse} and Corollary \ref{cor:opt_imp_not_opt_class}, we are interested in constructing predictive models that explicitly use missingness as a value in their prediction rather than imputing first.
More generally, we may also consider using the indicator for each \textit{type} of missing data directly in a prediction. Generalized additive models (GAMs) provide a natural choice for such a model.

A GAM $g: \mathbb{R}^d \to \mathbb{R}$ consists of a bias term $\beta_{0} \in \mathbb{R}$ and $d$ shape functions  $f_{1}, \hdots, f_{d}: \mathbb{R} \to \mathbb{R}$ parameterized by vectors $\beta_1, \hdots, \beta_d.$ Given a sample $\mathbf{x}_i,$ a GAM forms a prediction as:
\begin{align}
\label{eq:gam}
    g(\mathbf{x}_i; \beta) = \beta_0 + \sum_{j=1}^d f_{j}(x_{i,j}; \beta_j).
\end{align}
In practice, it is common for each shape function to be a linear combination of different thresholds on its input variable, i.e., $f_{j}(x_{i,j}; \beta_j) = \sum_{k=1}^{\text{len}(\mathbf{t}_j)} \beta_{j,k} \mathbf{1}_{[x_{i,j} \leq t_{j,k}]},$ where $\text{len}(\mathbf{t}_j) \in \mathbb{N}$ is the number of thresholds applied to variable $j$, each $t_{j, k} \in \mathbb{R}$ is a threshold value, and each $\beta_{j,k} \in \mathbb{R}$ is a learned weight.

These functions provide a convenient framework for considering missing values. In particular, we can form a new shape function $h_j(x_{i,j}; \beta_j, \beta^{\text{missing}}_j)$ that explicitly handles missing data by introducing additional ``missingness indicator'' terms, such that our shape functions take the form 
$$h_{j}(x_{i,j}; \beta_j, \beta^{\text{miss}}_j) = f_j(x_{i,j};\beta_j) + \sum_{m=1}^c \beta^{\text{miss}}_{j, m} \mathbf{1}_{[\reas(x_{i,j}) = m]},$$ 
where $\beta_j^{\text{miss}} \in \mathbb{R}^{c}$ is an additional vector of parameters. Recall that there are $c$ distinct reasons for missingness, with $\reas(x_{i,j}) = 0$ if $x_{i,j}$ is not missing and $\reas(x_{i,j}) = m$ if $x_{i,j}$ is missing for the $m$-th reason.

We may further extend this augmentation to include \textit{interaction terms} between \textit{missingness indicators and standard threshold functions}. The ``missingness interaction'' function between feature $j$ and feature $j'$ takes the form 
\begin{align*}
    h_{j,j'}&(x_{i,j}, x_{i,j'}; \alpha_{j, j'}) = \sum_{m=1}^c \sum_{k=1}^{\text{len}(\mathbf{t}_{j'}) }\alpha_{j,j',k,m} \mathbf{1}_{[\reas(x_{i,j}) = m \text{ and } x_{i,j'} \leq t_{j', k}]},
\end{align*} 
where each $\alpha_{j,j',k, m} \in \mathbb{R}$ is a learned weight. We thus define a missingness-GAM (\ours{}) $g_{\text{miss}}$ as follows:
\begin{definition}
Given parameters $\alpha$, $\beta^{\text{miss}}$, and $\beta$, an \ours{}  is defined as
\label{eq:m_gam}
    \begin{align}
        g_{\text{miss}}(\mathbf{x}_i; \beta, \beta^{\text{miss}}, \alpha) 
        &= \beta_0 + \sum_{j=1}^d \ h_j(x_{i,j}; \beta_j, \beta^{\text{miss}}_j) + \sum_{j=1}^d\sum_{j'=1}^d h_{j,j'}(x_{i,j}, x_{i,j'}; \alpha_{j, j'}),
    \end{align}
where
    $$h_{j,j'}(x_{i,j}, x_{i,j'}; \alpha_{j, j'}) = \sum_{m=1}^c \sum_{k=1}^{\text{len}(\mathbf{t}_{j'})} \alpha_{j,j',k,m} \mathbf{1}_{[\reas(x_{i,j}) = m \text{ and } x_{i,j'} \leq t_{j', k}]}$$
    and 
    $$h_{j}(x_{i,j}; \beta_j, \beta^{\text{miss}}_j) = f_j(x_{i,j};\beta_j) + \sum_{m=1}^c \beta^{\text{miss}}_{j, m} \mathbf{1}_{[\reas(x_{i,j}) = m]}.$$
\end{definition}

These augmentation terms are quite powerful. Theorem \ref{thm:inter_equals_imp} 
shows that, for any impute-then-predict approach using an affine imputer and a GAM predictor, we can construct an M-GAM that recovers the expected classification score over imputations.
\begin{theorem}
\label{thm:inter_equals_imp}
Consider any GAM $g: \mathbb{R}^d \to \mathbb{R}$, parameterized by $\beta$, with shape functions defined as linear combinations over boolean features (either thresholds $f_{j}(x_{i,j}; \beta_j) = \sum_{k=1}^{\text{len}(\mathbf{t}_j)} \beta_{j,k} \mathbf{1}_{[x_{i, j} \leq t_{j,k}]}$ or a feature that was originally boolean). 
Suppose some observations are missing boolean feature $b$, and that this feature is imputed such that the modeled probability of $x_{i, b}$ being true, $\hat{\mathbb{P}}(x_{i, b} = 1|\mathbf{x}_{i, -b})$ (where $\mathbf{x}_{i, -b}$ refers to all covariates except $b$) is an affine function $h:\mathbf{x}_{i, -b}\rightarrow [0,1]$.
For any parameterization $\beta$ of a GAM $g$, let 
$\mathbb{E}[g(\mathbf{x}_i; \beta)]:= \hat{\mathbb{P}}(x_{i, b} = 1|\mathbf{x}_{i, -b}) g(\mathbf{x}_i^{(b+)}; \beta) + \hat{\mathbb{P}}(x_{i, b} = 0|\mathbf{x}_{i, -b}) g(\mathbf{x}_i^{(b-)}; \beta),$ where $\mathbf{x}_i^{(b+)}$ denotes $\mathbf{x}_i$ with $x_{i, b}=1$ and $\mathbf{x}_i^{(b-)}$ denotes $\mathbf{x}_i$ with $x_{i, b}=0$. 
Then, there exists a model in the model class \ours{} (which does not use imputations), that recovers this score $\mathbb{E}[g(\mathbf{x}_i; \beta)]$ for all $i$.
\end{theorem}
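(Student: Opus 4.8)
The plan is to exploit the fact that the expected classification score $\mathbb{E}[g(\mathbf{x}_i;\beta)]$ is an \emph{affine} function of the imputation probability $p_i := h(\mathbf{x}_{i,-b})$, and that $p_i$ is itself affine in the (boolean) covariates. Consequently, the difference between this expected score and the output of the base GAM on the observed data is an affine function of the thresholded/boolean features---exactly the family of functions that the missingness-indicator and missingness-interaction terms of an M-GAM can represent. So I would construct the target M-GAM by (i) copying $g$'s parameters into the base terms $\beta$ of the M-GAM, and (ii) choosing $\beta^{\text{miss}}_b$ and $\alpha_{b,\cdot}$ so that the missingness terms, which activate exactly when $b$ is missing, supply the required affine correction.

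First I would isolate the contribution of $b$. Since $b$ is boolean and enters $g$ only through its own shape function, write $g(\mathbf{x}_i;\beta) = G_{-b}(\mathbf{x}_{i,-b}) + f_b(x_{i,b})$, where $G_{-b}$ collects the bias and all shape functions other than $f_b$. Writing $\Delta_b := f_b(1) - f_b(0)$, the two imputed evaluations differ only in the $b$-term, so $\mathbb{E}[g(\mathbf{x}_i;\beta)] = G_{-b}(\mathbf{x}_{i,-b}) + f_b(0) + p_i\,\Delta_b$. By the convention that comparisons against $\NA$ evaluate to $0$, when $b$ is missing the base part of the M-GAM (which shares $\beta$ with $g$) contributes $G_{-b}(\mathbf{x}_{i,-b})$ plus a fixed constant $f_b(\NA)$; absorbing $f_b(\NA)$, the missingness terms must therefore reproduce $f_b(0) + p_i\,\Delta_b - f_b(\NA)$.

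Next I would match coefficients. Expanding the affine imputer over the boolean features as $h(\mathbf{x}_{i,-b}) = a_0 + \sum_{j'\ne b}\sum_k a_{j',k}\,\mathbf{1}_{[x_{i,j'}\le t_{j',k}]}$, the required correction becomes $\big(f_b(0) - f_b(\NA) + a_0\Delta_b\big) + \sum_{j'\ne b}\sum_k (a_{j',k}\Delta_b)\,\mathbf{1}_{[x_{i,j'}\le t_{j',k}]}$. For each missingness reason $m$ I would then set the indicator weight $\beta^{\text{miss}}_{b,m} = f_b(0) - f_b(\NA) + a_0\Delta_b$ and the interaction weights $\alpha_{b,j',k,m} = a_{j',k}\Delta_b$, leaving all other new parameters at $0$. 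It remains only to verify the two regimes: when $b$ is observed, every indicator $\mathbf{1}_{[\reas(x_{i,b})=m]}$ is $0$, so the M-GAM collapses to $g(\mathbf{x}_i;\beta)$; when $b$ is missing for reason $m$, the base part gives $G_{-b} + f_b(\NA)$ and the missingness terms add exactly $f_b(0) - f_b(\NA) + p_i\Delta_b$, recovering $\mathbb{E}[g(\mathbf{x}_i;\beta)]$.

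The main obstacle I anticipate is representational rather than algebraic: the identity above requires that the affine imputer $h$ be expressible as an affine function of the \emph{same} boolean/threshold features the M-GAM uses, so that $p_i\Delta_b$ lands in the span of the available missingness-interaction terms. I would address this by reading ``affine in $\mathbf{x}_{i,-b}$'' against the boolean featurization underlying the GAM (thresholds and originally-boolean variables), which is the setting in which the interaction terms $\mathbf{1}_{[\reas(x_{i,b})=m \text{ and } x_{i,j'}\le t_{j',k}]}$ form the correct basis. Two minor checks round out the argument: confirming under the $\NA$ convention that $f_b(\NA)$ is a well-defined constant (zero for threshold-encoded features) so that it folds into $\beta^{\text{miss}}_{b,m}$, and, when $b$ can be missing for several reasons, replicating the same weights across all reasons $m$ since the imputer does not distinguish among them.
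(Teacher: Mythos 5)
Your proposal is correct and follows essentially the same route as the paper's proof: copy the base GAM coefficients, expand the affine imputation probability over the same boolean/threshold features, and absorb the constant part of the correction $p_i\,\Delta_b$ into the missingness indicator $\beta^{\text{miss}}_{b,\cdot}$ and the feature-dependent part into the interaction weights $\alpha_{b,j',k,\cdot}$. The only cosmetic difference is that the paper writes the affine imputer in a normalized form $a\frac{s-\min s}{\max s - \min s}+d$ over an additive score $s$, whereas you expand it directly as $a_0+\sum_{j',k}a_{j',k}\mathbf{1}_{[x_{i,j'}\le t_{j',k}]}$; the resulting coefficient assignments are identical.
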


More broadly, Theorem \ref{thm:inter_equals_imp} suggests that \ours{} is able to express scores comparable to those of any impute-then-predict GAM, if the imputation probabilities can be approximated by an additive model. One advantage is that \ours{} can be optimized directly for classification performance -- rather than first optimizing an imputation step to recover missing values, and then optimizing a model on the imputed data. Together, Proposition \ref{thm:impute_can_be_worse} and Theorem \ref{thm:inter_equals_imp} show that \ours{} is comparable to impute-then-predict in a broad range of settings and that \ours{} is strictly better than impute-then-predict in some settings. Appendix \ref{sec:app_inter_equals_imp} contains the proof for Theorem \ref{thm:inter_equals_imp}.

\subsection{Sparsity}
Building a GAM with missingness indicators and interaction terms provides superior expressive power but causes an explosion of the number of covariates the model must consider. The GAM in Equation (\ref{eq:gam}) consists of $\sum_j^d \text{len}(\mathbf{t}_j)+1$ coefficients, while the \ours{} in Definition (\ref{eq:m_gam}) consists of $c\sum_j^d \left( \sum_{j'\neq j} \text{len}(\mathbf{t}_{j'}) \right) + \sum_j^d (\text{len}(\mathbf{t}_{j})) + cd + 1$
coefficients (or, $d \sum_j^d (\text{len}(\textbf{t}_j))  + d + 1$ when $c=1$). This increases the risk of overfitting and may lead to complex, uninterpretable models. The same problem arises when adding similar interaction terms to a linear model, as diagnosed by \citet{van2023missing}, who propose a hypothesis testing style framework for variable selection. Notably, this framework does not explicitly encourage sparsity -- it only discourages overfitting.

Rather than applying a pre-processing step for variable selection, we use $\ell_0$ regularization, which we can optimize directly alongside accuracy.
This encourages the model coefficients to be 0, resulting in sparse models despite the potentially large number of input features. We optimize classification performance using the exponential loss, as it yields faster convergence rates than logistic loss during optimization \citep{liu2022fast}. Thus, our goal is to solve the following optimization problem:
\begin{align}
\label{eq:opt_problem}
    \begin{split}
    \min_{\beta, \beta^{\text{miss}}, \alpha} \bigg(& \frac{1}{n}\sum_{i=1}^n e^{-y_i g_{\text{miss}}(\mathbf{x}_i; \beta, \beta^{\text{miss}}, \alpha)} + \lambda_0 (\|\beta\|_0 + \|\beta^{\text{miss}}\|_0 + \|\alpha\|_0)\bigg),    
    \end{split}
\end{align}
where $\lambda_0$ is a hyperparameter that determines the strength of the $\ell_0$ regularization. 

To simplify (\ref{eq:opt_problem}) so it can be solved directly, we construct a new set of features $\bar{\mathbf{X}} \in \{0, 1\}^{c\sum_j^d \left( \sum_{j'\neq j} \text{len}(\mathbf{t}_{j'}) \right) + \sum_j^d (\text{len}(\mathbf{t}_{j})) + cd + 1}$ consisting of the indicator for each threshold, missing value, and interaction term for each feature in the original dataset. For a large coefficient vector $\gamma \in \mathbb{R}^{c\sum_j^d \left( \sum_{j'\neq j} \text{len}(\mathbf{t}_{j'}) \right) + \sum_j^d (\text{len}(\mathbf{t}_{j})) + cd + 1}$ and bias coefficient $\gamma_0 \in \mathbb{R},$ our optimization problem becomes:
\begin{align}
\label{eq:opt_problem_linear}
    \min_{\gamma} \frac{1}{n}\sum_{i=1}^ne^{-y_i (\gamma^T\bar{\mathbf{x}}_i + \gamma_0) } + \lambda_0 \|\gamma\|_0,
\end{align}
which is solved using the optimization framework of \citet{liu2022fast}. This allows us to quickly produce sparse \ours{}s, overcoming the large number of input values.

\section{Experiments}
\label{experiments}

We now evaluate the performance, runtime, and sparsity of \ours{} in comparison to other methods. To evaluate \ours{} in a realistic setting, we require datasets with some missing entries.
We primarily consider four datasets: the Explainable Machine Learning Challenge dataset \citep{competition} (referred to as FICO), a breast cancer dataset introduced by \citet{breastCancer} (referred to as Breast Cancer), the MIMIC-III critical care dataset \citep{johnson2016mimic} (referred to as MIMIC), and a dataset concerning the prediction of pharyngitis introduced by \citet{miyagi2023pharyngitis} (referred to as Pharyngitis).
FICO contains 10,459 individuals, measuring 23 predictor variables used to predict whether each individual will repay a line of credit within 2 years. FICO contains three distinct encodings for missingness: -7, indicating no information of a given type is available, -8, indicating there was no usable information, and -9, indicating that a credit bureau report was not investigated or not found. Breast Cancer measures 27 features for 1,756 patients, MIMIC measures 49 features for 30,238 patients, and Pharyngitis measures 19 features for 676 patients. 
We use AUC, rather than accuracy, when evaluating model performance for Breast Cancer and MIMIC because these two datasets are heavily imbalanced. Breast Cancer, MIMIC, and Pharyngitis contain only one missingness encoding. Two additional datasets are studied in Section \ref{sec:app_additional_experiments} of the appendix.

Each dataset contains missing entries. Because these are real datasets, we do not know the exact mechanism(s) (i.e., MCAR, MAR, or MNAR) by which data are missing. 
These datasets allow us to evaluate \ours{} on data with (Section \ref{sec:MAR_missingness}) and without (Section \ref{sec:real_acc}) added MAR missingness.

We then study the interpretability/accuracy tradeoff for \ours{} using sparsity versus accuracy plots (Section \ref{sec:sparsity_acc}) and evaluate the runtime of \ours{} (Section \ref{sec:timing}). 
We use 
multivariate imputation by chained equations (MICE) \citep{van1999flexible}, MIWAE\citep{mattei2019miwae}, and MissForest\citep{stekhoven2012missforest} as multiple imputation baselines. 

We compare M-GAM to a variety of standard machine learning models used in an impute-then-predict framework. We further compare to standard machine learning models used with both the missingness augmentation described by \citet{van2023missing} and imputation.
Section \ref{sec:app_real_data_acc_setup} of the appendix contains full experimental details.

\subsection{\ours{} Provides Superior Performance Given Informative MAR Missingness}

\label{sec:MAR_missingness}
\begin{figure*}[ht]
    \centering
    \includegraphics[width=\textwidth]{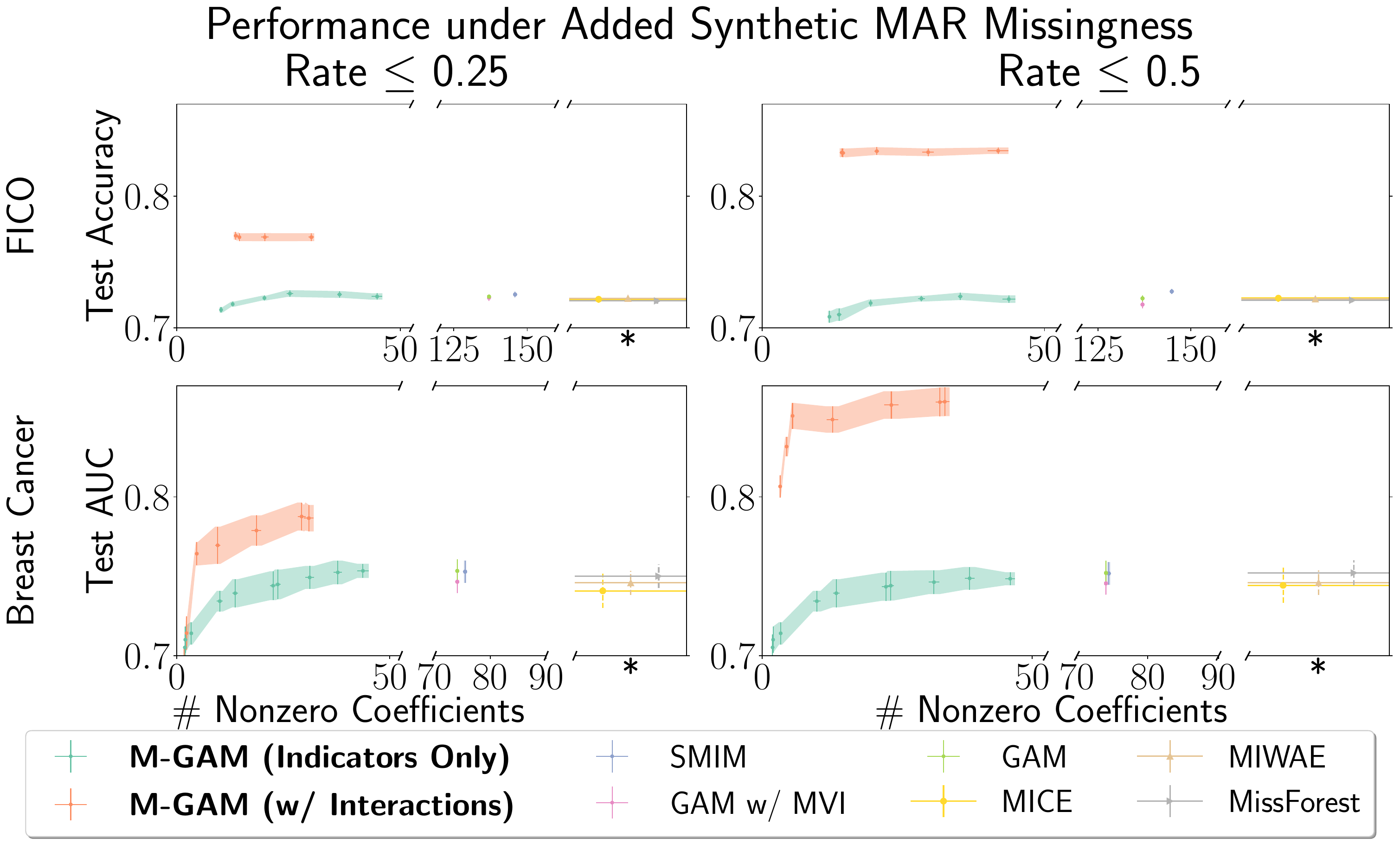}
    \caption{
    Sparsity of M-GAM when synthetic MAR missingness is added to up to $25\%$ (left column) and $50\%$ (right column) of entries in FICO (top row) and Breast Cancer (bottom row). 
    We compare to several alternatives for GAMs with missing data: ensembling 10 GAMs fit on multiple imputation (for MIWAE, MICE, and MissForest), 0-value imputation (``GAM"), mean-value imputation (``GAM w/ MVI"), and selective addition of missingness indicators (``SMIM"). The number of non-zero coefficients for multiple imputation cannot be evaluated because the models depend on both the GAM coefficients and the underlying imputation mechanisms, resulting in high dimensional shape functions as in Figure \ref{fig:impu_breaks_gam}.
    Error bars report standard error over 10 train-test splits.
    }
    \label{fig:synthetic-sparsity-acc}
\end{figure*}

To demonstrate the added expressive capability of our model relative to impute-then-predict models, we created versions of each dataset with added synthetic missingness. Missingness is added to an arbitrary column of the data according to an MAR mechanism, where missingness is dependent on the outcome $Y$ and one other randomly chosen predictor variable. We encode this added missingness as a new value distinct from the value(s) used to indicate missingness in the original dataset. A conditional probability table for this synthetic missingness is provided in Appendix Section \ref{sec:app_cond_prob_table}. 

This adjustment falls under the MAR setting where imputation is often suggested. Nevertheless, as shown in Figure \ref{fig:synthetic-sparsity-acc}, \ours{} with interactions provides much greater accuracy than imputation, because the missingness depends on the outcome. The gain in performance due to considering interaction terms grows larger with increasing MAR missingness (from left to right).

Note that missingness depending on the outcome of interest is realistic. For example, individuals who are unlikely to have a particular disease are unlikely to receive medical tests related to that disease. 
One theory about why polls were wrong before the 2016 US presidential election was that non-response bias was associated with less education and distrust in the media, both predictors of votes for Donald Trump \citep{npr}. 

\subsection{\ours{} Achieves High Performance While Maintaining Sparsity}
\label{sec:sparsity_acc}
The most important benefit of the missingness handling in \ours{} is that it enables simple, sparse models. As such, we show the tradeoff between complexity and performance for \ours{}. 

Figure \ref{fig:sparsity-acc} demonstrates the sparsity-accuracy trade-off for \ours{} 
relative to a GAM fit using 
Scikit Learn's logistic regression package \citep{sklearn} over binarized features,
trained on data from 10 imputations using a multiple imputation method. We also contrast two different levels of missing variable parameterization: the full set of indicators and interactions versus using just indicators. 

We quantify interpretability using the number of nonzero coefficients selected by \ours{}, since a large number of non-zero coefficients leads to a dense, complicated mapping from the input data to a prediction. Meanwhile, running impute-then-predict is not interpretable: the method requires ensembling many different GAMs, and the imputations themselves introduce complicated relationships between the raw data and the classifications, similar to what was illustrated in Figure \ref{fig:impu_breaks_gam}. We show that with fewer than 40 total coefficients (including all step functions for all variables), \ours{} can achieve accuracy comparable to that of GAMs with multiple imputation. On FICO, \ours{} -- using just 20 non-zero coefficients -- achieves superior accuracy to a variety of dense, complicated alternatives.

\begin{figure}[ht]
    \centering
    \includegraphics[width=\textwidth]{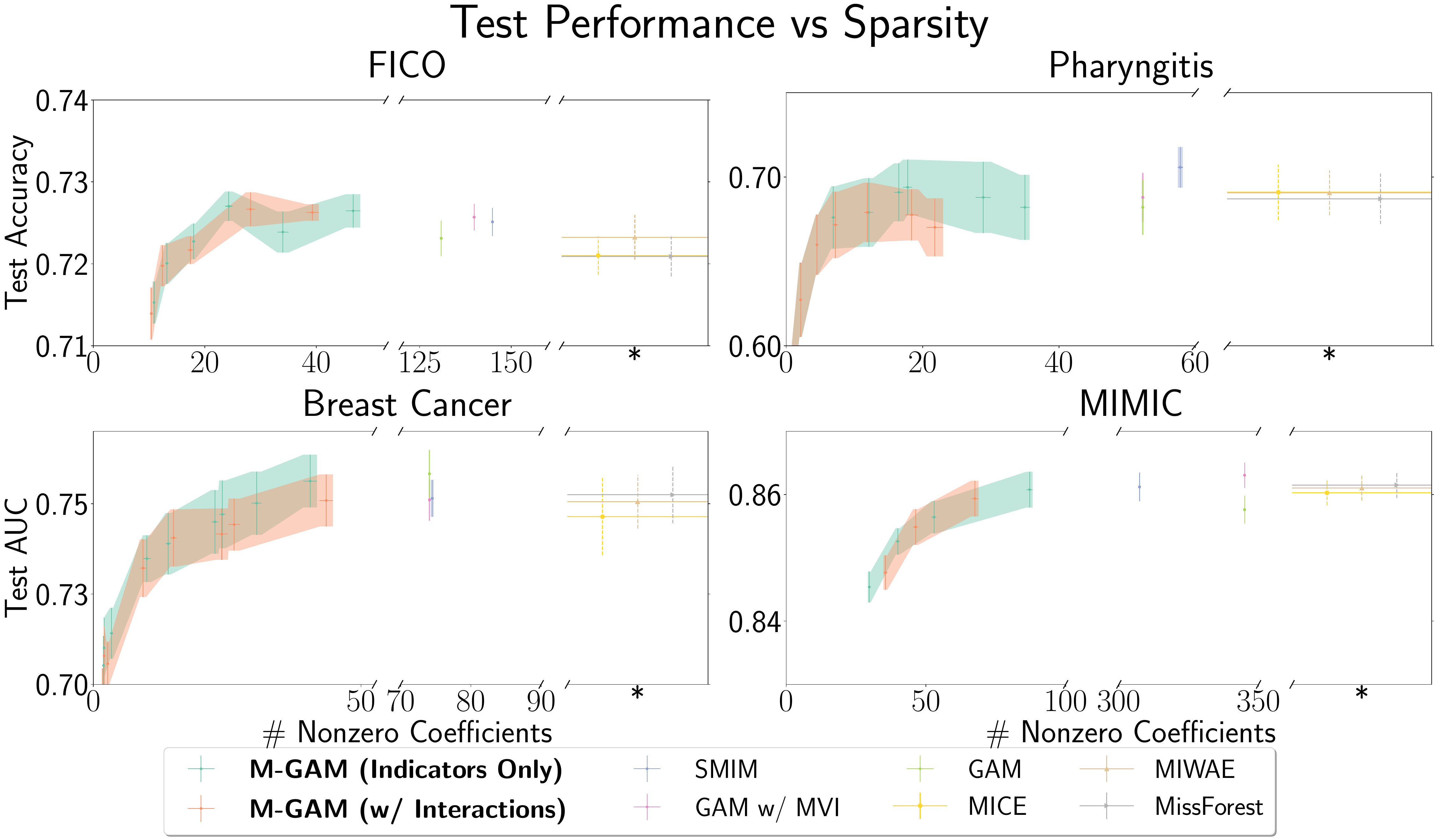}
    \caption{Test performance of three models at various levels of sparsity on the unaltered FICO and Breast Cancer datasets, with the same baselines as in Figure \ref{fig:synthetic-sparsity-acc}}
    \label{fig:sparsity-acc}
\end{figure}

\subsection{\ours{} is Faster than Impute-then-Predict}
\label{sec:timing}
We next turn to a runtime comparison between the impute-then-predict framework and \ours{}. For impute-then-predict models, we first imputed 10 datasets and recorded the time required to do so. We then fit a predictive model on each imputed training dataset and recorded the total time required. We recorded the time required to fit an \ours{} with missingness indicators and an \ours{} with missingness interactions for comparison. This was repeated for each of ten distinct train-test splits of the original dataset. 

Figure \ref{fig:timing-scalability} shows the runtime of our approach relative to each impute-then-predict baseline, as well as decision trees and random forests without imputation. \ours{} consistently produces models at least an order of magnitude more quickly than impute-then-predict with any non-trivial imputation. While decision trees without imputation tend to be produced faster than \ours{}, they tend to have lower accuracy than \ours{} as discussed in the next section. We repeat this experiment for four distinct subsamples of each dataset (1/4, 1/2, 3/4, and all of the data) to study how each method scales in the number of samples in Appendix \ref{sec:app_scalability}.

\begin{figure}[h!]
    \centering
    \includegraphics[width=1.05\textwidth]{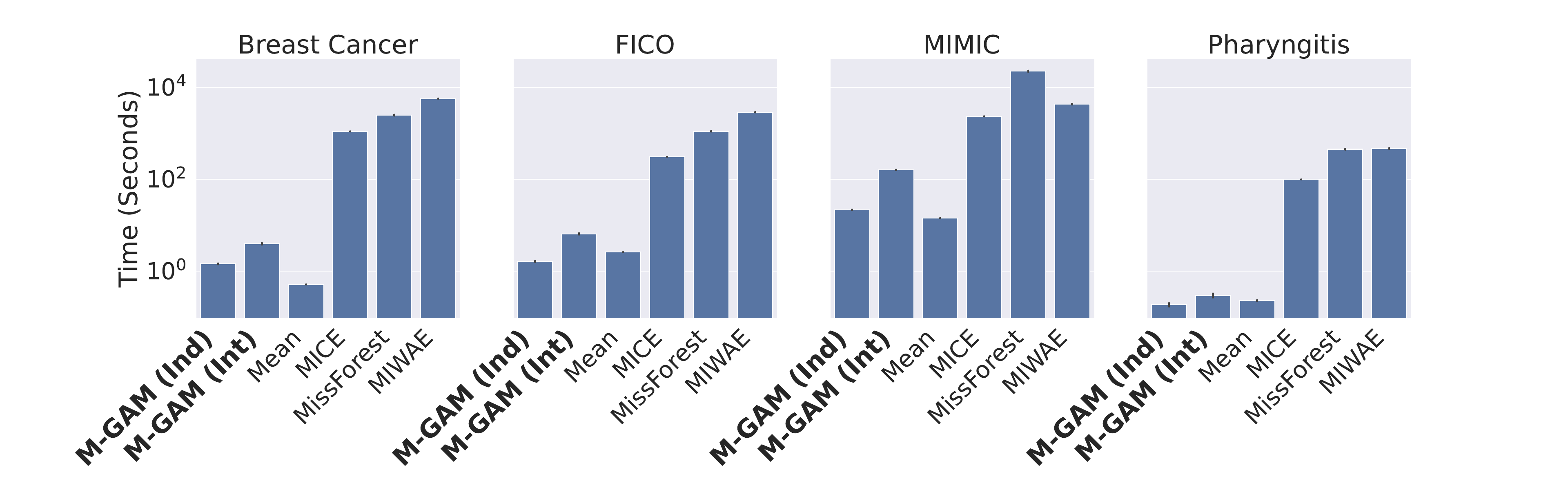}
    \caption{Runtime of different methods on Breast Cancer, FICO, MIMIC, and Pharyngitis. For each imputation method, we report the total time required to impute missing data and fit the best performing impute-then-predict classifier for that dataset and imputation method. 
    M-GAM (Ind) is an M-GAM with indicators and M-GAM (Int) is an M-GAM with indicators and interaction terms. 
    Error bars report standard error of total runtime over 10 train-test splits.}
    \label{fig:timing-scalability}
\end{figure}

\subsection{\ours{} is as Accurate as Impute-then-Predict on Real Data}
\label{sec:real_acc}
While \ours{} outperforms imputation on semi-synthetic data, there is a risk that this comes at the cost of performance on real data. To evaluate whether this is the case, we used several multiple imputation methods to impute 10 distinct datasets for each setting, then fit a variety of predictive models on these datasets. 
We used cross validation to select hyperparameters separately for each imputed training dataset and ensembled the resulting 10 models for each model class to produce a single predictive model for each model class. We repeated this procedure for ten distinct train-test splits for each dataset considered. 

Figure \ref{fig:baseline_accuracy} shows the test accuracy of each model. We find that, on real datasets, no alternative method substantially outperforms \ours{}. This suggests that \ours{} does not harm predictive performance on real datasets, while providing substantial benefits in interpretability (Section \ref{sec:sparsity_acc}) and superior power under informative missingness (Section \ref{sec:MAR_missingness}).

\begin{figure}
    \centering
    \includegraphics[width=1\textwidth]{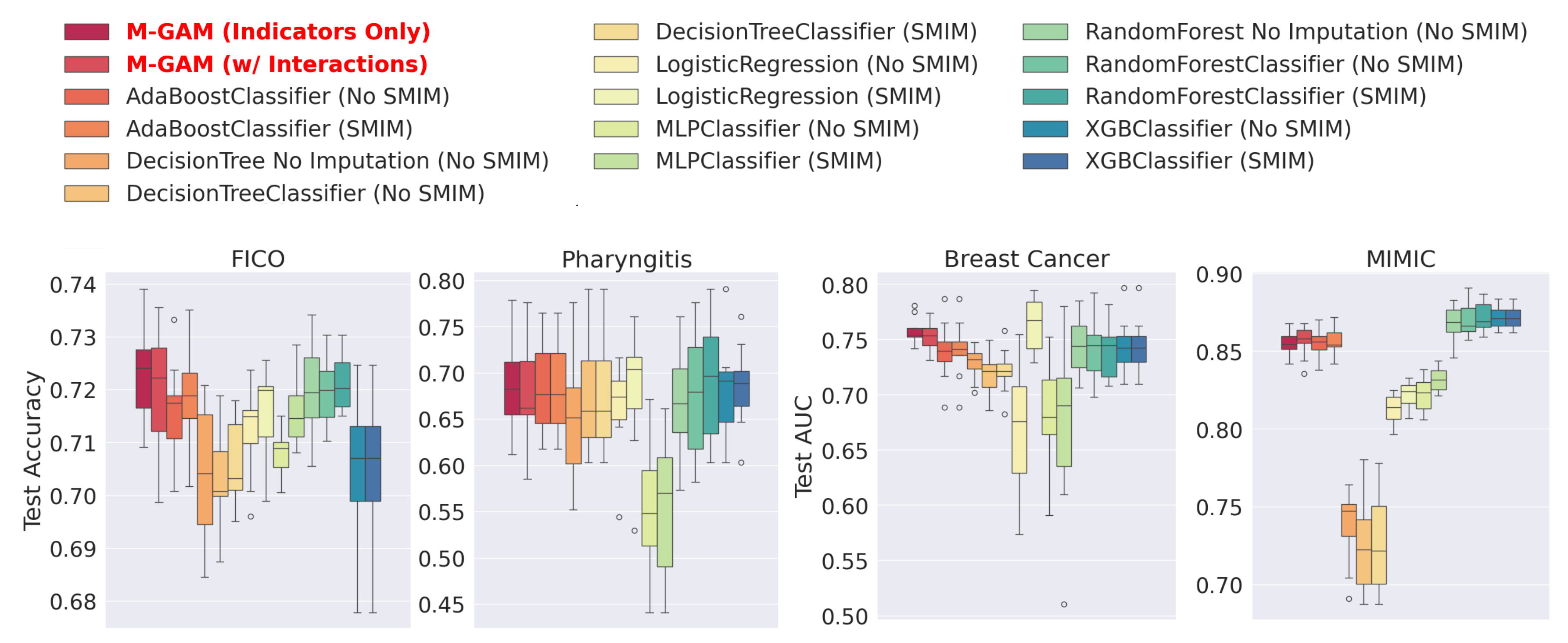}
    \caption{Box-and-whiskers plots comparing test performance of baseline models to \ours{} on four datasets over ten train-test splits. All methods except M-GAMs, ``DecisionTree No Imputation'', and ``RandomForest No Imputation'' impute ten datasets using MICE then ensemble the models fit on each dataset.}
    \label{fig:baseline_accuracy}
\end{figure}

\section{Conclusion}
\label{conclusion}
We introduced \ours{}, a framework for producing accurate, sparse GAMs in the presence of missing data. We demonstrated that \ours{} achieves comparable accuracy to impute-then-predict on real datasets and superior accuracy under informative synthetic missingness. \ours{} produces models substantially more quickly than impute-then-predict models with multiple imputation, and provides simple, transparent reasoning on missing data.

While the $\ell_0$ penalty in \ours{} encourages sparsity, it is limited in that the $\ell_0$ regularization is applied uniformly across all coefficients.  
Consider the case when we are adding interaction terms to handle missingness. We might encourage the model to rely on features it is already using to predict y, rather than using new features. It may be more effective regularization -- and more interpretable -- to have a reduced $\ell_0$ penalty for such cases. 
Future work should investigate applying distinct levels of regularization to observed variables, missingness indicators, and missingness interactions when a variable is already included in the model. 

An important caveat to models that reason on missing features is that missingness can be especially vulnerable to distribution shift, particularly in a medical domain \citep{sperrin2020missing, groenwold2020informative}. The interpretability enabled by \ours{} is crucial in allowing models to be closely monitored and adjusted in the presence of potential distribution shift. Future work could more thoroughly investigate potential distribution shift and ways to adjust a model which reasons on missing data.

On the whole, \ours{} quickly produces accurate, interpretable models, providing a new degree of transparency to predictions in the presence of missing data. The code used for this work is available at \href{https://github.com/jdonnelly36/M-GAM}{https://github.com/jdonnelly36/M-GAM}.

\textbf{Societal Impacts.}  
\ours{} offers an interpretable way to deploy machine learning in high stakes domains like medicine, even when data is missing.
Modeling decisions for missing data  
risk introducing or perpetuating unfairness \citep{jeanselme2022imputation}. We view interpretability as a key tool for addressing this.

\section{Acknowledgements}
\label{acknowledgements}

We acknowledge funding from the National Institutes of Health under 5R01-DA054994, the National Science Foundation under grant HRD-2222336 and the Department of Energy under grant DE-SC002135. 
Additionally, this material is based upon work supported by the National Science Foundation Graduate Research Fellowship under Grant No. DGE 2139754. 
Finally, we thank Jiachang Liu for his helpful advice.


\bibliography{references}
\bibliographystyle{icml2024}

\newpage
\appendix
\onecolumn



\section{Proof of Proposition \ref{thm:impute_can_be_worse}}
\label{sec:app_extra_examples}
\begin{figure}
    \centering
    \includegraphics[width=0.3\textwidth]{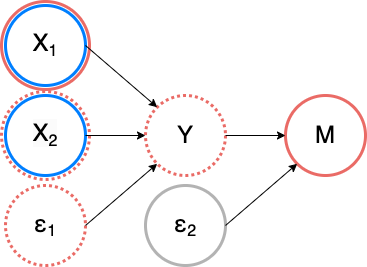}
    \caption{The case constructed to prove Proposition \ref{thm:app_impute_can_be_worse}. Model $f_1$ takes as input variables in blue circles, $f_2$ takes as input variables in red circles, and dashed red circles indicate variables involved in generating missingness in $X_1$, denoted by $M$. Here, $\epsilon_1$ and $\epsilon_2$ are unmeasured noise. By using $M$ as an input, $f_2$ can infer information about $Y$ \textit{after} noise from $\epsilon_1$ is considered.}
    \label{fig:visualize_example}
\end{figure}

First, recall Proposition \ref{thm:impute_can_be_worse}:
\begin{proposition}
\label{thm:app_impute_can_be_worse}
    Let $I:\mathbb{R}^{d} \to \mathbb{R}^{d}$ be an oracle imputation function that replaces all missing values in a vector with the correct non-missing entry. For a random variable $X \in \mathbb{R}^d$, let $f_1(X):=\mathbf{1}_{[\mathbb{E}[Y|I(X)]>0.5]}$ be the Bayes' optimal model using perfectly imputed data and $f_2(X):=\mathbf{1}_{[\mathbb{E}[Y|X]>0.5]}$ be the Bayes' optimal model using missingness as a value. There exist data generating processes for $X$ and $Y$ where $P(Y = f_1(X)) < P(Y = f_2(X)).$
    
\end{proposition}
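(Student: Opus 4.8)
The plan is to prove the statement by an explicit construction of a data generating process in which missingness is \emph{informative} about the label in a way that the underlying feature value is not. The guiding principle, foreshadowed in the excerpt and in Figure \ref{fig:visualize_example}, is that the two classifiers condition on incomparable information: $f_1$ always sees the true feature values (since $I$ is a perfect imputer) but loses the missingness pattern, whereas $f_2$ retains the missingness pattern but loses the true value on missing entries. Neither information set refines the other, so to separate them I would engineer a process in which the information $f_1$ recovers by ``un-missing'' an entry is irrelevant to $Y$, while the missingness status that $f_2$ keeps is predictive of $Y$.

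Concretely, I would take a single informative coordinate, say $X_1$, and specify: (i) a label $Y \sim \mathrm{Bernoulli}(q)$; (ii) a true value for $X_1$ drawn independently of $Y$; and (iii) a missingness mechanism for $X_1$ whose probability depends on $Y$ (directly, or through a shared latent noise $\epsilon_1$ that also drives $Y$, as in Figure \ref{fig:visualize_example}), with distinct missingness rates $p_0 \neq p_1$ for the two label values. Any remaining coordinates can be taken as pure noise so that they carry no signal. Note that the true value of $X_1$ always exists, so perfect imputation recovers it even when the observed entry is $\NA$; this is what lets us feed $f_1$ a value that is, by design, useless.

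The steps are then as follows. First, compute $f_1$. Because the imputed vector $I(X)$ reveals only the true value of $X_1$, which is independent of $Y$ by construction, $\mathbb{E}[Y \mid I(X)]$ collapses to the prior $\mathbb{E}[Y] = q$, so $f_1$ must predict the majority label and attains accuracy $\max(q, 1-q)$. Second, compute $f_2$. Applying Bayes' rule to the missingness indicator $M$ yields the posteriors $P(Y = 1 \mid M = 1)$ and $P(Y = 1 \mid M = 0)$ in terms of $q, p_0, p_1$; since $p_0 \neq p_1$ these two posteriors straddle the prior, so thresholding each at $0.5$ lets $f_2$ predict different labels in the two missingness regimes and achieve accuracy strictly above $\max(q, 1-q)$. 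Third, substitute explicit numerical values for $q, p_0, p_1$ to exhibit a strict gap $P(Y = f_1(X)) < P(Y = f_2(X))$ and conclude.

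The main obstacle is ensuring the separation is genuine and strict rather than an artifact of tie-breaking at the $0.5$ threshold. I must choose $q, p_0, p_1$ so that (a) the true value of $X_1$ is truly uninformative about $Y$, guaranteeing that perfect imputation destroys, rather than merely rearranges, the usable signal, and (b) at least one of the missingness-conditioned posteriors lies strictly on the opposite side of $0.5$ from the prior, with both posteriors bounded away from $0.5$ so that the Bayes classifier $f_2$ is unambiguous and its advantage is strict. Verifying these two conditions simultaneously, and confirming that the $0.5$-thresholded conditional expectations defining $f_1$ and $f_2$ are indeed Bayes-optimal for their respective information sets, is the only delicate part; the accuracy computations themselves are routine.
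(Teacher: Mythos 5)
Your proposal is correct, and your construction is genuinely different from the one in the paper. The paper keeps the sometimes-missing feature informative: it sets $Y = |X_1X_2 - \epsilon_1|$ with $\epsilon_1 \sim \mathrm{Bern}(k_1)$ and makes the missingness indicator $M = |Y - \epsilon_2|$ a \emph{less noisy} proxy for $Y$ ($k_2 < k_1$), so the imputation model attains accuracy exactly $1-k_1$ while the missingness model attains at least $1-k_2$. You instead make the true value of $X_1$ completely independent of $Y$, which collapses $\mathbb{E}[Y\mid I(X)]$ to the prior $q$ and reduces the comparison to a two-line Bayes computation on the indicator $M$; your stated conditions (prior bounded away from $0.5$, at least one missingness-conditioned posterior strictly on the other side of $0.5$) are exactly what is needed, and are easily satisfied, e.g.\ $q = 0.4$, $P(M=1\mid Y=1)=0.9$, $P(M=1\mid Y=0)=0.1$ gives $f_1$ accuracy $0.6$ versus $f_2$ accuracy $0.9$. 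What your route buys is simplicity and transparency about the mechanism (the two classifiers condition on incomparable $\sigma$-fields, and you zero out one side entirely); what the paper's route buys is a slightly stronger moral, namely that missingness can dominate even when the imputed feature is itself genuinely predictive of $Y$ --- a point the paper pushes further in its alternative proof, where the missingness channel is \emph{noisier} than the data and missingness is rare. Both establish the existential claim; yours is a valid and arguably more elementary proof.
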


\begin{proof}
We prove Proposition \ref{thm:impute_can_be_worse} by construction.

Let $X_1, X_2 \sim \text{Bernoulli}(p=0.5)$ and $Y := |X_1 X_2 - \epsilon_1|$ where $\epsilon_1 \sim \text{Bernoulli}(p=k_1)$ is unobserved noise. Let $M$ denote missingness in $X_1,$ where $M := |Y - \epsilon_2|$ with $\epsilon_2 \sim \text{Bernoulli}(p=k_2)$ being unobserved noise. Let $k_2 < k_1 < 0.5.$

Consider two oracle models $f_1$ and $f_2,$ defined as:
\begin{align*}
    f_1(X_1, X_2) := \mathbf{1}_{[\mathbb{E}[Y | I(X)] \geq 0.5]} = \mathbf{1}_{[\mathbb{E}[Y | X_1, X_2] \geq 0.5]}& & f_2(M, X_2) := \mathbf{1}_{[\mathbb{E}[Y | X] \geq 0.5]} = \mathbf{1}_{[\mathbb{E}[Y | M, X_2] \geq 0.5]}.
\end{align*} Intuitively, $f_1$ perfectly imputes $X_1$ when $X_1$ is missing, then predicts $Y$ using $X_1$ and $X_2,$ while $f_2$ predicts $Y$ using $M$ and $X_2.$ 

We first evaluate the expected accuracy of the imputation model, i.e., $\mathbb{P}(f_1(X_1,X_2) = Y).$ Using the law of total probability, we have:
\begin{align*}
    \mathbb{P}(f_1(X_1,X_2) = Y) &= \sum_{x_1, x_2} \mathbb{P}(x_1,x_2) \mathbb{P}(f_1(x_1,x_2) = Y | x_1, x_2)\\
    &= \sum_{x_1, x_2} \mathbb{P}(x_1) \mathbb{P}(x_2)\mathbb{P}(f_1(x_1,x_2) = Y | x_1, x_2) & \text{Since $X_1 \perp X_2$} \\
    &= 0.25 \sum_{x_1, x_2}\mathbb{P}(f_1(x_1,x_2) = Y | x_1, x_2)\\
\end{align*}
Noting that, when $X_1=X_2=1$, we have $Y = |1 -\epsilon_1|,$ so $\mathbb{P}(Y=1|X_1=1,X_2=1) = \mathbb{P}(\epsilon_1 = 0) = 1-k_1.$ When at least one of $X_1$ and $X_2$ is $0,$ we have $Y = |0 -\epsilon_1|,$ so $\mathbb{P}(Y=1|X_1 X_2=0) = \mathbb{P}(\epsilon_1 = 1) = k_1.$ Since $f_1$ simply evaluates whether the expectation of $Y$ given $X_1$ and $X_2$ is greater than $0.5$, if $k_1 < 0.5,$ we have $f_1(1, 1) = 1$ and $f_1(0, 1)=f_1(1, 0)=f_1(0,0)=0.$ Thus, the expected accuracy is:
\begin{align*}
    \mathbb{P}(f_1(X_1,X_2) = Y) 
    &= 0.25 \sum_{x_1, x_2}\mathbb{P}(f_1(x_1,x_2) = Y | x_1, x_2)\\
    &= 0.25 \sum_{x_1, x_2}\mathbb{P}(\epsilon_1 = 0)\\
    &= \mathbb{P}(\epsilon_1 = 0)\\
    &= 1-k_1\\
\end{align*}

Similarly, 
\begin{align*}
    \mathbb{P}(f_2(m, x_2) = Y) &= \sum_{m,x_2} \mathbb{P}(m, x_2) \mathbb{P}(f_2(m, x_2) = Y | m, x_2)\\
    &= \sum_{m,x_2}  \mathbb{P}(x)\mathbb{P}(m|x_2)\mathbb{P}(f_2(m,x_2) = Y | m,x_2) \\
    &= 0.5 \big(\mathbb{P}(m=1|x_2=1) \mathbb{P}(f_2(m,x_2) = Y | m,x_2) + \mathbb{P}(m=1|x_2=0) \mathbb{P}(f_2(m,x_2) = Y | m,x_2) \\
    &+\mathbb{P}(m=0|x=1) \mathbb{P}(f_2(m,x_2) = Y | m,x_2) + \mathbb{P}(m=0|x_2=0) \mathbb{P}(f_2(m,x_2) = Y | m,x_2)\big)\\
\end{align*}
We can compute each term in the conditional probability table $P(M|X_2)$ as follow:
\begin{align*}
    \mathbb{P}(m=0|x_2=0) &= \sum_y \mathbb{P}(m=0|x_2=0, y) \mathbb{P}(y|x_2=0)\\
    &= (1-k_2)(1-k_1) + k_2k_1\\
    \mathbb{P}(m=1|x_2=0) &= \sum_y \mathbb{P}(m=1|x_2=0, y) \mathbb{P}(y|x_2=0)\\
    &= k_2 (1-k_1) + (1-k_2)k_1\\
    \mathbb{P}(m=0|x_2=1) &= \sum_y \mathbb{P}(m=0|x_2=1, y) \mathbb{P}(y|x_2=1)\\
    &= k_2 (0.5k_1 + 0.5(1-k1)) + (1-k_2)(0.5(1-k_1) + 0.5k1)\\
    &= 0.5\\
    \mathbb{P}(m=1|x_2=1) &= \sum_y \mathbb{P}(m=1|x_2=1, y) \mathbb{P}(y|x_2=1)\\
    &= (1-k_2) (0.5k_1 + 0.5(1-k_1)) + k_2(0.5(1-k_1) + 0.5k_1)\\
    &= 0.5\\
\end{align*}

Plugging these values in, we have
\begin{align*}
    \mathbb{P}(f_2(M, X_2) = Y)= 0.5 \bigg(&(1-k_1 -k_2 + 2k_1k_2)\mathbb{P}(f_2(m,x_2) = Y | m=0,x_2=0)\\ 
    &+ (k_2 +k_1 - 2k_1k_2) \mathbb{P}(f_2(m,x_2) = Y | m=1,x_2=0) \\
    &+ 0.5 \mathbb{P}(f_2(m,x_2) = Y | m=0,x_2=1) \\
    &+ 0.5 \mathbb{P}(f_2(m,x_2) = Y | m=1,x_2=1)\bigg)\\
    \geq 0.5 \bigg(&(1-k_1 -k_2 + 2k_1k_2)\mathbb{P}(f_2(m,x_2) = Y | m=0)\\ 
    &+ (k_2 +k_1 - 2k_1k_2) \mathbb{P}(f_2(m,x_2) = Y | m=1) \\
    &+ 0.5 \mathbb{P}(f_2(m,x_2) = Y | m=0) \\
    &+ 0.5 \mathbb{P}(f_2(m,x_2) = Y | m=1)\bigg)\\
    = 0.5 &(1 - k_1 - k_2 + 2k_1k_2 + k_2 + k_1 - 2k_1k_2 + 0.5 +0.5)(1-k_2)\\
    = 1 - &k_2
\end{align*}

Thus, we have $\mathbb{P}(f_1(X_1, X_2) = Y) = 1-k_1$ and $\mathbb{P}(f_2(M, X_2) = Y) \geq 1-k_2.$ Since $k_2 < k_1,$ we have
\begin{align*}
    k_2 &< k_1\\
    1 - k_2 &> 1 - k_1\\
    \mathbb{P}(f_2(M, X_2) = Y) &> \mathbb{P}(f_1(X_1, X_2) = Y) ,
\end{align*}
as required, where the last step follows because $\mathbb{P}(f_1(X_1,X_2) = Y) \geq 1-k_2$.
\end{proof}

\subsection{Alternative Proof}
While the above offers a proof of Proposition \ref{thm:impute_can_be_worse}, one might wonder whether the proposition holds outside of the case we constructed. In the previous proof, we aimed for a case that resulted in an easy to follow proof. Here, we prove Proposition \ref{thm:impute_can_be_worse} by a second construction,
to show that imputation can be worse than missingness-as-a-value even if there is more noise in the informative missingness than in the data itself, and even when missingness is relatively rare.

\begin{proof}

Consider a case similar to that used for the proof in proposition 3.1, where we add an additional variable $X_3$  and have the noise for the missingness be higher than any other noise. 

$$Y = |X_1X_2 - \epsilon_1|, \epsilon_1 \sim \textrm{Bern}(\frac{1}{12})$$
$$M = \begin{cases} |Y - \epsilon_2|, \epsilon_2 \sim \textrm{Bern}(\frac{1}{4}), &\textrm{with probability $\frac{1}{2}$} \\
0, &\textrm{with probability $\frac{1}{2}$}\end{cases}$$
$$X_3 = |Y - \epsilon_3|, \epsilon_3 \sim \text{Bern}(\frac{1}{11})$$

We also adjust the probabilities for $X_1$ and $X_2$ being true so that this is a balanced classification problem: $X_1, X_2 \sim \textrm{Bern}(\frac{1}{\sqrt{2}})$, so $X_1X_2 \sim \textrm{Bern}(\frac{1}{2})$,

Note that we now also have missingness at well under 50\% of the data (missingness happens a quarter of the time).

The bayes optimal model with perfect imputation of $X_1$, and no access to $M$, is still just to predict in accordance with $X_1X_2$ for $\mathbb{P}(X_1X_2=Y) = \frac{11}{12}$. When $X_1X_2 = X_3$, all information available suggests $X_1X_2$ is correct. 
When $X_1X_2 \neq X_3$, we still have the bayes optimal prediction aligning with $X_1X_2$: $\mathbb{P}(Y=1|X_1X_2=1,X_3=0) = \frac{11}{21} > 0.5$ and $\mathbb{P}(Y=0|X_1X_2=0,X_3=1) = \frac{11}{21} > 0.5$. 

If we instead only have access to $X_1$ when it is not missing, but we also know when $X_1$ is missing (i.e. we know $M$), then the following approach will perform better than the above model: 
$$Y = \begin{cases}
    X_3, &\textrm{if $M=1$} \\
    X_1X_2X_3, &\textrm{if $M=0$}
\end{cases}$$

When $M=1$, we make additional errors relative to the previous approach at rate 

\begin{align*}
&\mathbb{P}(M=1)(\mathbb{P}((X_3)\neq Y) - \mathbb{P}((X_1X_2)\neq Y))\\
    &=\frac{1}{4} (\frac{1}{11}- \frac{1}{12})\\
    &= \frac{1}{528}
\end{align*}

When $M=0$, we improve our classifier's accuracy by: 
\begin{align*}
&\mathbb{P}(\text{Imputation model is wrong, model with missingness is right and }M=0)\\ 
&- \mathbb{P}(\text{Imputation model is right, model with missingness is wrong and }M=0)\\
= &\mathbb{P}(X_1X_2 \neq Y = X_1X_2X_3, M=0) - \mathbb{P}(X_1X_2 = Y \neq X_1X_2X_3, M=0)\\
= &\mathbb{P}(X_1X_2=1, X_3=0, Y=0, M=0) - \mathbb{P}(X_1X_2=1, X_3=0, Y=1, M=0)\\
    = &\mathbb{P}(X_1X_2=1)\mathbb{P}( Y=0|X_1X_2=1)\mathbb{P}(X_3=0|Y=0)\mathbb{P}(M=0|Y=0) \\ &- \mathbb{P}(X_1X_2=1)\mathbb{P}( Y=1|X_1X_2=1)\mathbb{P}(X_3=0|Y=1)\mathbb{P}(M=0|Y=1)\\
    = &\frac{1}{2}\frac{1}{12}\frac{10}{11}\frac{7}{8} - \frac{1}{2}\frac{11}{12}\frac{1}{11}\frac{5}{8}\\
    = &\frac{15}{2112}\\
> &\frac{1}{528}
\end{align*}

That is, the proportion of cases the classifier that uses missingness will gain is greater than the proportion it will lose relative to the imputation approach. So, the model that uses missingness outperforms the imputation model.
\end{proof}

\newpage

\section{Proof of Corollary \ref{cor:opt_imp_not_opt_class}}
\label{sec:app_cor_not_opt}

\begin{corollary}
\label{cor:app_opt_imp_not_opt_class}
    Let $\mathcal{R}(f, \mathbf{X}, \mathbf{Y})$ denote the risk of a model $f$ for data $\mathbf{X}, \mathbf{Y}$, and $\mathcal{R}^*$ the optimal risk. Let $I:(\mathbb{R} \cup \text{NA})^{d} \to \mathbb{R}^{d}$ denote the oracle imputation function of Proposition \ref{thm:impute_can_be_worse}. Under perfect imputation, it is possible for there to be no Bayes optimal model built on imputed data. 
    That is,
    $$
    \exists (\mathbf{X}, \mathbf{Y}) \left[\nexists f : \mathcal{R}(f \circ I, \mathbf{X}, \mathbf{Y}) = \mathcal{R}^*\right].
    $$
\end{corollary}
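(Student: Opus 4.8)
The plan is to obtain Corollary \ref{cor:opt_imp_not_opt_class} as an immediate consequence of the construction already used to prove Proposition \ref{thm:impute_can_be_worse}. Reading $\mathcal{R}$ as the $0$-$1$ (misclassification) risk, the Bayes risk is $\mathcal{R}^* = \mathbb{P}(Y \neq f_2(X))$, attained by the missingness-as-a-value classifier $f_2(X) = \mathbf{1}_{[\mathbb{E}[Y \mid X] > 0.5]}$, since $f_2$ is by definition the Bayes-optimal classifier given the full observation $X$ (including the missingness pattern).

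Next I would observe that any model of the form $f \circ I$ is a deterministic function of $I(X)$ alone, i.e.\ it is measurable with respect to the coarser information $\sigma(I(X)) \subseteq \sigma(X)$. By the standard characterization of the Bayes classifier applied to the reduced covariate $I(X)$, the minimal $0$-$1$ risk achievable by any such composition is
\[
\min_f \mathcal{R}(f \circ I, \mathbf{X}, \mathbf{y}) = \mathbb{P}(Y \neq f_1(X)),
\]
and this minimum is attained by $f_1(X) = \mathbf{1}_{[\mathbb{E}[Y \mid I(X)] > 0.5]}$. The point is that perfect imputation erases exactly the missingness information that $f_2$ exploits, so no post-imputation function can recover it.

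Finally I would invoke the strict inequality established in Proposition \ref{thm:impute_can_be_worse}, namely $\mathbb{P}(Y = f_1(X)) < \mathbb{P}(Y = f_2(X))$ for the constructed data-generating process. This gives
\[
\min_f \mathcal{R}(f \circ I, \mathbf{X}, \mathbf{y}) = \mathbb{P}(Y \neq f_1(X)) > \mathbb{P}(Y \neq f_2(X)) = \mathcal{R}^*,
\]
so that $\mathcal{R}(f \circ I, \mathbf{X}, \mathbf{y}) > \mathcal{R}^*$ for every $f$. Hence for this distribution there is no $f$ with $\mathcal{R}(f \circ I, \mathbf{X}, \mathbf{y}) = \mathcal{R}^*$, which is exactly the existential claim of the corollary.

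The main obstacle is the middle step: justifying that $f_1$ is genuinely the best classifier expressible as $f \circ I$, rather than merely one candidate. This rests on the fact that $I$ is a fixed deterministic map, so composing with $I$ can only discard information; the clean way to phrase it is that minimizing $0$-$1$ risk over functions of $I(X)$ is itself a Bayes-classification problem with $I(X)$ as the covariate, whose optimum is the posterior-thresholding rule $f_1$. Everything else is bookkeeping: matching the risk notation of the corollary to the accuracy statement of the proposition via $\mathcal{R} = 1 - \text{accuracy}$, and carrying the strict inequality through.
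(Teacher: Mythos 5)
Your proposal is correct and takes essentially the same route as the paper: both reduce the corollary to the strict inequality $\mathbb{P}(Y = f_1(X)) < \mathbb{P}(Y = f_2(X))$ from the construction in Proposition \ref{thm:impute_can_be_worse}, together with the observation that the posterior-thresholding rule $f_1$ on $I(X)$ attains $\min_f \mathcal{R}(f \circ I, \mathbf{X}, \mathbf{y})$ while $\mathcal{R}(f_2, \mathbf{X}, \mathbf{y}) \geq \mathcal{R}^*$ is (at least) the Bayes risk. Your explicit justification that no $f \circ I$ can beat $f_1$ --- because composing with the deterministic map $I$ only coarsens the available information --- is a step the paper leaves implicit by simply defining $f_1$ as the argmin, so spelling it out is a welcome but not substantively different addition.
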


\begin{proof}
    Recall that, in Proposition \ref{thm:app_impute_can_be_worse}, we constructed a distribution of $X, Y$ such that 
    \begin{align*}
        \mathcal{R}(f_1, X, Y) &> \mathcal{R}(f_2, X, Y)
    \end{align*} 
    where
    \begin{align*}
        f_1 &= \argmin_{f} \mathcal{R}(f, I(X), Y)\\
        f_2 &= \argmin_{f} \mathcal{R}(f, X, Y)
    \end{align*}
    By definition of $\mathcal{R}^*$, we have that $\mathcal{R}(f_2, X, Y) \geq \mathcal{R}^*,$ immediately yielding
    \begin{align*}
        \mathcal{R}(f_1, X, Y) > \mathcal{R}(f_2, X, Y) \geq \mathcal{R}^*.
    \end{align*}
\end{proof}
\newpage

\section{Proof of Theorem \ref{thm:inter_equals_imp}}\label{sec:app_inter_equals_imp}

\newcommand{\minscore}{\min_{i'}(s(\mathbf{x}_{i',-b}))}
\newcommand{\maxscore}{\max_{i'}(s(\mathbf{x}_{i',-b}))}

Theorem \ref{thm:inter_equals_imp} states:
\begin{theorem}
\label{thm:app_inter_equals_imp}
Consider any GAM 
$g: \mathbb{R}^d \to \mathbb{R}$, parameterized by $\beta$,
with shape functions defined as linear combinations over boolean features (either thresholds $f_{j}(x_{i,j}; \beta_j) = \sum_{k=1}^{\text{len}(\mathbf{t}_j)} \beta_{j,k} \mathbf{1}_{[x_{i, j} \leq t_{j,k}]}$ or a feature that was originally boolean). 
Suppose some observations are missing boolean feature $b$, and that this feature is imputed such that the modeled probability of $x_{i, b}$ being true, $\hat{\mathbb{P}}(x_{i, b} = 1|\mathbf{x}_{i, -b})$ (where $\mathbf{x}_{i, -b}$ refers to all covariates except $b$) is an affine function $h:\mathbf{x}_{i, -b}\rightarrow [0,1]$.
For any parameterization $\beta$ of $g$, let 
$\mathbb{E}[g(\mathbf{x}_i; \beta)]:= \hat{\mathbb{P}}(x_{i, b} = 1|\mathbf{x}_{i, -b}) g(\mathbf{x}_i^{(b+)}; \beta) + \hat{\mathbb{P}}(x_{i, b} = 0|\mathbf{x}_{i, -b}) g(\mathbf{x}_i^{(b-)}; \beta),$ where $\mathbf{x}_i^{(b+)}$ denotes $\mathbf{x}_i$ with $x_{i, b}=1$ and $\mathbf{x}_i^{(b-)}$ denotes $\mathbf{x}_i$ with $x_{i, b}=0$.
Then, there exists a model in the model class \ours{} (which does not use imputations), that recovers this score $\mathbb{E}[g(\mathbf{x}_i; \beta)]$ for all $i$.
\end{theorem}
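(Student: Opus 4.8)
The plan is to give an explicit, coefficient-by-coefficient construction. First I would isolate the only quantity in $\mathbb{E}[g(\mathbf{x}_i;\beta)]$ that depends on the missing feature. Writing $G_{-b}(\mathbf{x}_i) := \beta_0 + \sum_{j \neq b} f_j(x_{i,j};\beta_j)$ for the part of $g$ not involving feature $b$, and letting $v_1 := f_b(1;\beta_b)$ and $v_0 := f_b(0;\beta_b)$ denote the two values the $b$-shape function can take, the two imputed predictions $g(\mathbf{x}_i^{(b+)};\beta)$ and $g(\mathbf{x}_i^{(b-)};\beta)$ share the common summand $G_{-b}(\mathbf{x}_i)$ and differ only through $f_b$. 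With $\Delta := v_1 - v_0$ and $\hat{p}_i := h(\mathbf{x}_{i,-b})$, the convex combination collapses to
$$\mathbb{E}[g(\mathbf{x}_i;\beta)] = G_{-b}(\mathbf{x}_i) + v_0 + \hat{p}_i\,\Delta,$$
so the only nontrivial thing to reproduce is the scalar multiple $\hat{p}_i\,\Delta$ of the affine imputation probability.

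Next I would match this target with the M-GAM of Definition \ref{eq:m_gam}, taking a single missingness reason ($c=1$) for feature $b$. For every observed feature $j \neq b$, set the M-GAM shape function equal to that of $g$ and set the bias to $\beta_0$; since those features are observed, $\reas(x_{i,j})=0$ and the contribution is exactly $G_{-b}(\mathbf{x}_i)$ regardless of whether $b$ is missing. When $b$ is observed, every missingness indicator and every interaction term involving $b$ is zero and $h_b$ reduces to $f_b$, so the M-GAM returns $g(\mathbf{x}_i;\beta)$ exactly. The substantive case is when $b$ is missing: by the convention $\mathbf{1}_{[\NA \leq a]} = 0$ the threshold part $f_b$ evaluates to $0$, and only the missingness-indicator weight $\beta^{\text{miss}}_{b,1}$ and the interaction terms $h_{b,j'}$ remain active.

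The key step is to realize $v_0 + \hat{p}_i\,\Delta$ using precisely those switched-on terms. Expanding the affine imputer in the boolean-feature basis on which $g$ is built, write $h(\mathbf{x}_{i,-b}) = a_0 + \sum_{j' \neq b}\sum_k a_{j',k}\,\mathbf{1}_{[x_{i,j'} \leq t_{j',k}]}$. I would then choose the indicator weight $\beta^{\text{miss}}_{b,1} := v_0 + \Delta\,a_0$ to absorb the constant part, and the interaction weights $\alpha_{b,j',k,1} := \Delta\,a_{j',k}$ so that $\sum_{j'\neq b}\sum_k \alpha_{b,j',k,1}\,\mathbf{1}_{[x_{i,j'}\leq t_{j',k}]} = \Delta\,(\hat{p}_i - a_0)$. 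Summing the gated contributions gives exactly $v_0 + \hat{p}_i\,\Delta$, which together with $G_{-b}(\mathbf{x}_i)$ produces $\mathbb{E}[g(\mathbf{x}_i;\beta)]$, as required.

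The main obstacle is representational, and it lives in this last step: M-GAM interaction terms are missingness-gated linear combinations of \emph{threshold indicators} of the other features, so the matching only goes through because $h$ is affine in the same boolean basis that $g$ uses. I would therefore state explicitly that ``affine'' is meant with respect to this boolean representation, so that the linear part of $h$ is itself a combination of thresholds $\mathbf{1}_{[x_{i,j'} \leq t_{j',k}]}$; the interaction block then has exactly enough degrees of freedom to reproduce it, while the constant offset $v_0 + \Delta a_0$ is harmlessly folded into the single coefficient $\beta^{\text{miss}}_{b,1}$. Extending to several missingness reasons ($c>1$) or to more than one missing feature is routine, since each reason and each missing feature contributes its own independent block of indicator and interaction coefficients.
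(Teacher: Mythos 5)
Your proposal is correct and follows essentially the same route as the paper's proof: isolate the contribution of the missing boolean feature, expand the affine imputation probability in the same threshold-indicator basis, and then absorb its constant part into the missingness-indicator coefficient $\beta^{\text{miss}}_{b,1}$ and its linear part into the interaction coefficients $\alpha_{b,j',k,1}$ scaled by the weight on feature $b$. The only cosmetic difference is that the paper writes the affine imputer in a min--max normalized form while you work with a general affine expansion, which makes your coefficient assignments slightly cleaner but does not change the argument.
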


\begin{proof}
\footnote{Note that it is sufficient to show that this this theorem holds for a single missingness reason, $m = 1$. Showing that this theorem holds for \ours{} in the less expressive case where there is only a single reason for missingness, also shows that the theorem holds when there is added expressiveness to the model class. As such, we do not include notation for distinct missingness reasons in the proof. }

\newcommand{\ind}[1]{\mathbf{1}[#1]}

\subsection{Model Family Definition for $\mathbb{E}[g(\mathbf{x}_i; \beta)]$}

The general GAM, as per equation (\ref{eq:gam}), is
$g(\mathbf{x}_i; \beta) = \beta_0 + \sum_{j=1}^d f_{j}(x_{i,j}; \beta_j)$, or with the shape functions incorporated, 
$$g(\mathbf{x}_i; \beta) = \beta_0 + \sum_{j=1}^d \sum_{k=1}^{\text{len}(\mathbf{t}_j)} \beta_{j,k} \mathbf{1}_{[x_{i,j} \leq t_{j,k}]}$$
where $\mathbf{1}_{[x_{i, j} \leq t_{j,k}]}$ values corresponding to a missing feature $j$ are replaced with some imputed value based on a GAM. To highlight the sometimes-missing boolean feature $b$, we may rewrite this as
$$g(\mathbf{x}_i; \beta) = \beta_0 + \left(\sum_{j\neq b} \sum_{k=1}^{\text{len}(\mathbf{t}_j)} \beta_{j,k} \mathbf{1}_{[x_{i,j} \leq t_{j,k}]} \beta_{b} \mathbf{1}_{[x_{i,b}]}\right) + \beta_{b} \mathbf{1}_{[x_{i,b}]}$$

For examples $\mathbf{x}_i$ with feature $b$ missing, the value $\mathbf{1}_{[x_{i,b}]}$ must be imputed prior to being used. Under the setting of this proof, the feature $b$ is imputed by modeling the probability as an affine function of an additive score. Without loss of generality, that means there exists some additive model with score: 
$s(\mathbf{x}_{i,-b}) = C_{0} + \sum_{j \neq b} \sum_{k=1}^{\text{len}(\mathbf{t}_j)} C_{j,k} \mathbf{1}_{[x_{i,j} \leq t_{j,k}]}$
for some set of real-valued coefficients $C$,
with corresponding probability of $x_{i,b}=1$ given, WLOG, by: 
$$\hat{\mathbb{P}}(x_{i,b} = 1 | \mathbf{x}_{i,-b}) = a\frac{s(\mathbf{x}_{i,-b}) - \minscore}{\maxscore - \minscore} + d$$
where $a > 0, d > 0, a + d = 1$. Note that we can set $C_0=0$ without loss of generality because it will appear in both terms in the numerator and both terms in the denominator, having no impact on the overall probability.

The values $\maxscore$ and $\minscore$ denote, respectively, the maximum and minimum of the score function over any possible $\mathbf{x}_i$. 

If we take the expectation of the score function over the probability distribution learned by the imputing additive model, 
we get: 

$$g(\mathbf{x}_i; \beta) = \beta_0 + \left(\sum_{j' \neq b} \sum_{k=1}^{\text{len}(\mathbf{t}_j)} \beta_{j',k} \mathbf{1}_{[x_{i,j'} \leq t_{j',k}]}\right) + \beta_b \mathbf{1}_{[x_{i,b}]} + \beta_{b} \mathbf{1}_{[\reas{(x_{i, b}) \neq 0]}} \hat{\mathbb{P}}(x_{i,b} = 1 | \mathbf{x}_{i,-b})$$

(Recalling that $\mathbf{1}_{[x_{i,b}]}$ is defined to always be false when $x_{i,b}$ is missing, and $\mathbf{1}_{\reas{(x_{i,b}) \neq 0}}$ is true iff $x_{i,b}$ is missing.)

We can put the $\beta_b\mathbf{1}_{\reas{(x_{i,b}) \neq 0}}$ term back in the summation over thresholds and have: 
$$g(\mathbf{x}_i; \beta) = \ \beta_0 + \left(\sum_{j = 1}^d \sum_{k=1}^{\text{len}(\mathbf{t}_j)} \beta_{j,k} \mathbf{1}_{[x_{i,j} \leq t_{j,k}]}\right) + \beta_{b}\mathbf{1}_{[\reas{(x_{i, b}) \neq 0]}} \hat{\mathbb{P}}(x_{i,b} = 1 | \mathbf{x}_{i,-b})$$

Simplifying further: 
\begin{align*}
g(\mathbf{x}_i; \beta) =& \ \beta_0 + \left(\sum_{j = 1}^d \sum_{k=1}^{\text{len}(\mathbf{t}_j)} \beta_{j,k} \mathbf{1}_{[x_{i,j} \leq t_{j,k}]}\right) + \beta_{b}\mathbf{1}_{[\reas{(x_{i, b}) \neq 0]}} \left(a\frac{s(x_{-b}) - \minscore}{\maxscore - \minscore} + d\right)\\
g(\mathbf{x}_i; \beta) =& \ \beta_0 + \left(\sum_{j =1}^d \sum_{k=1}^{\text{len}(\mathbf{t}_j)} \beta_{j,k} \mathbf{1}_{[x_{i,j} \leq t_{j,k}]}\right) + \beta_{b}\mathbf{1}_{[\reas{(x_{i, b}) \neq 0]}} a\frac{s(x_{-b}) - \minscore}{\maxscore - \minscore} \\ & + \beta_{b}\mathbf{1}_{[\reas{(x_{i, b}) \neq 0]}} d \\
g(\mathbf{x}_i; \beta) =& \ \beta_0 + \left(\sum_{j=1}^d \sum_{k=1}^{\text{len}(\mathbf{t}_j)} \beta_{j,k} \mathbf{1}_{[x_{i,j} \leq t_{j,k}]}\right) + \beta_{b} a\frac{s(x_{-b})}{\maxscore - \minscore} \\
& - \beta_{b}\mathbf{1}_{[\reas{(x_{i, b}) \neq 0]}} a\frac{\minscore}{\maxscore - \minscore}+ \beta_{b}\mathbf{1}_{[\reas{(x_{i, b}) \neq 0]}} d \\
g(\mathbf{x}_i; \beta) =& \  \beta_0 + \sum_{j = 1}^d \sum_{k=1}^{\text{len}(\mathbf{t}_j)} \beta_{j,k} \mathbf{1}_{[x_{i,j} \leq t_{j,k}]} \\&+ s(x_{-b})\mathbf{1}_{[\reas{(x_{i, b}) \neq 0]}}\frac{\beta_{b} a}{\maxscore - \minscore} \\&+  \mathbf{1}_{[\reas{(x_{i, b}) \neq 0]}} \left(\beta_{b} d - \frac{\beta_{b} a \minscore}{\maxscore - \minscore}\right)
\end{align*}

We can return to our shape function notation and write:

\begin{align} \label{eq:imputation}
g(\mathbf{x}_i; \beta) =& \  \beta_0 + \sum_{j = 1}^d  f_{j}(x_{i,j}; \beta_j) \nonumber\\&+ \mathbf{1}_{[\reas{(x_{i, b}) \neq 0]}}s(x_{-b})\frac{\beta_{b} a}{\maxscore - \minscore} \\&+  \mathbf{1}_{[\reas{(x_{i, b}) \neq 0]}} \left(\beta_{b} d - \frac{\beta_{b} a \minscore}{\maxscore - \minscore}\right) \nonumber
\end{align}

\subsection{\ours{} Model Family Definition}

Recall from section \ref{Methods}, Equation (\ref{eq:m_gam}), that the form of \ours{} is: 
\begin{equation*}
    \begin{split}
    g_{\text{miss}}(\mathbf{x}_i; \bar{\beta}, \bar{\beta}^{\text{miss}}, \bar{\alpha}) 
    &= \bar{\beta}_0 + \sum_{j=1}^d \ h_j(x_{i,j}; \bar{\beta}_j, \bar{\beta}^{\text{miss}}_j) + \sum_{j=1}^d\sum_{j'=1}^d h_{j,j'}(x_{i,j}, x_{i,j'}; \bar{\alpha}_{j, j'}) .
    \end{split}
\end{equation*}

Filling in the shape functions, we have: 

\begin{equation}\label{eq:us}
    \begin{split}
    g_{\text{miss}}(\mathbf{x}_i; \bar{\beta}, \bar{\beta}^{\text{miss}}, \bar{\alpha}) 
    &= \bar{\beta}_0 + \sum_{j=1}^d \left(f_j(x_j,\bar{\beta}_j) + \sum_{m=1}^c \bar{\beta}^{\text{miss}}_{j, m} \mathbf{1}_{[\reas(x_{i,j}) = m]} \right) \\ &+ \sum_{j=1}^d\sum_{j'=1}^d \sum_{m=1}^c \sum_{k=1}^{\text{len}(\mathbf{t}_j)} \bar{\alpha}_{j,j',k,m} \mathbf{1}_{[\reas(x_{i,j}) = m \text{ and } x_{i,j'} \leq t_{j', k}]} .
    \end{split}
\end{equation}

\subsection{Show that any linear imputation model has an equivalent representation as a missingness term model}
To show this, it is sufficient to show that for any coefficients of a linear imputation model, there exists a parameterization of \ours{} such that Equation (\ref{eq:us}) is equivalent to Equation (\ref{eq:imputation}).

Start from Equation (\ref{eq:us}): 
\begin{equation*}
    \begin{split}
    g_{\text{miss}}(\mathbf{x}_i; \bar{\beta}, \bar{\beta}^{\text{miss}}, \bar{\alpha}) 
    &= \bar{\beta}_0 + \sum_{j=1}^d \left(f_j(x_j,\bar{\beta}_j) + \sum_{m=1}^c \bar{\beta}^{\text{miss}}_{j, m} \mathbf{1}_{[\reas(x_{i,j}) = m]} \right) \\ &+ \sum_{j=1}^d\sum_{j'=1}^d \sum_{m=1}^c \sum_{k=1}^{\text{len}(\mathbf{t}_j)} \bar{\alpha}_{j,j',k,m} \mathbf{1}_{[\reas(x_{i,j}) = m \text{ and } x_{i,j'} \leq t_{j', k}]} .
    \end{split}
\end{equation*}
Rearranging: 
\begin{equation*}
    \begin{split}
    g_{\text{miss}}(\mathbf{x}_i; \bar{\beta}, \bar{\beta}^{\text{miss}}, \bar{\alpha}) 
    &= \bar{\beta}_0 + \sum_{j=1}^d f_j(x_j,\bar{\beta}_j) \\& + \sum_{j=1}^d\sum_{m=1}^c \bar{\beta}^{\text{miss}}_{j, m} \mathbf{1}_{[\reas(x_{i,j}) = m]} \\ &+ \sum_{j=1}^d\sum_{j'=1}^d \sum_{m=1}^c \sum_{k=1}^{\text{len}(\mathbf{t}_j)} \bar{\alpha}_{j,j',k,m} \mathbf{1}_{[\reas(x_{i,j}) = m \text{ and } x_{i,j'} \leq t_{j', k}]} .
    \end{split}
\end{equation*}

We can pick the non-missing coefficients $\bar{\beta}$ for  Equation (\ref{eq:us}) to match those from Equation (\ref{eq:imputation}), leaving $\bar{\beta}_0 + \sum_{j=1}^df_j(x_j,\bar{\beta}_j)=\beta_0 + \sum_{j=1}^df_j(x_j,\beta_j).$

Now pick $\bar{\beta}^{\text{miss}}_{j, m} = 0$ except when $j=b$, and $\bar{\beta}^{\text{miss}}_{b, 1} = \left(\beta_{b} d - \frac{\beta_{b} a \minscore}{\maxscore - \minscore}\right)$. Then we have: 

\begin{equation*}
    \begin{split}
    g_{\text{miss}}(\mathbf{x}_i; \bar{\beta}, \bar{\beta}^{\text{miss}}, \alpha) 
    &= \beta_0 + \sum_{j=1}^d f_j(x_j,\beta_j) 
    \\& + \mathbf{1}_{[\reas(x_{i,j}) = m]} \left(\beta_{b} d - \frac{\beta_{b} a \minscore}{\maxscore - \minscore}\right)
    \\ &+ \sum_{j=1}^d\sum_{j'=1}^d \sum_{m=1}^c \sum_{k=1}^{\text{len}(\mathbf{t}_j)} \bar{\alpha}_{j,j',k,m} \mathbf{1}_{[\reas(x_{i,j}) = m \text{ and } x_{i,j'} \leq t_{j', k}]} .
    \end{split}
\end{equation*}

\begin{equation*}
    \begin{split}
    g_{\text{miss}}(\mathbf{x}_i; \bar{\beta}, \bar{\beta}^{\text{miss}}, \bar{\alpha}) 
    &= \beta_0 + \sum_{j=1}^d f_j(x_j,\beta_j) 
    \\ &+ \sum_{j=1}^d\sum_{j'=1}^d \sum_{m=1}^c \sum_{k=1}^{\text{len}(\mathbf{t}_j)} \bar{\alpha}_{j,j',k,m} \mathbf{1}_{[\reas(x_{i,j}) = m \text{ and } x_{i,j'} \leq t_{j', k}]} .
    \\& + \mathbf{1}_{[\reas(x_{i,j}) = m]} \left(\beta_{b} d - \frac{\beta_{b} a \minscore}{\maxscore - \minscore}\right)
    \end{split}
\end{equation*}

Now, parameterize $\sum_{j'=1}^d \sum_{m=1}^c \sum_{k=1}^{\text{len}(\mathbf{t}_j)} \bar{\alpha}_{b,j',k,m} \mathbf{1}_{[\reas(x_{i,b}) = m \text{ and } x_{i,j'} \leq t_{j', k}]}$ to match $s(x_{-b})\frac{\beta_{b} a}{\maxscore - \minscore}$ by, for each $j',$ setting $$\bar{\alpha}_{b,j',k,1} = C_{j',k} \frac{\beta_{b} a}{\maxscore - \minscore}$$
Set $\bar{\alpha}_{j,j',k,m} = 0$ for all $j \neq b$. Then we have: 

\begin{equation*}
    \begin{split}
    g_{\text{miss}}(\mathbf{x}_i; \beta, \beta^{\text{miss}}, \alpha) 
    &= \beta_0 + \sum_{j=1}^d f_j(x_j,\beta_j) 
    \\& + \mathbf{1}_{[\reas(x_{i,j}) = m]} \left(\beta_{b} d - \frac{\beta_{b} a \minscore}{\maxscore - \minscore}\right)
    \\ &+ \mathbf{1}_{[\reas(x_{i,b}) = 1]}s(x_{-b})\frac{\beta_{b} a}{\maxscore - \minscore} .
    \end{split}
\end{equation*}

Now, as required, we have shown that Equations \ref{eq:us} and \ref{eq:imputation} are equivalent. 

\end{proof}

\newpage
\section{Experimental Details}

\subsection{Conditional Probability Table for Added MAR Missingness}
\label{sec:app_cond_prob_table}
In Section \ref{sec:MAR_missingness}, we evaluated the predictive power of \ours{} when additional synthetic missingness was added to each dataset we studied. Let $X_1$ denote the variable missingness is being added to, $X_2$ another column from the dataset used to determine whether to add missingness, and $Y$ our target outcome. Let $M$ denote whether synthetic missingness is added to $X_1$, and let $Q_{X_2}(p)$ denote the $p$-th quantile of $X_2$. For a target missing rate $r$, Table \ref{tab:cond_prob_tab} shows the conditional probability of $X_1$ being missing given each value of $Y$ and $X_2$. Note that missingness does not depend on $X_1$, making this an MAR setting. 

\begin{table}[h]
\label{tab:cond_prob_tab}
    \centering
    \begin{tabular}{c|c|c|}
    & $Y=0$ & $Y=1$\\
    \hline
        $X_2 \geq Q_{X_2}(0.6)$ & $P(M=1|Y,X_2) = 0$ & $P(M=1|Y, X_2) = r$ \\
    \hline
         $X_2 < Q_{X_2}(0.6)$ & $P(M=1|Y, X_2) = r$ & $P(M=1|Y, X_2) = 0$\\
    \hline
    \end{tabular}
    \caption{The conditional probability table obeyed when adding synthetic missingness.} 
    \label{tab:my_label}
\end{table}

\subsection{Data Processing}
\label{sec:app_data_preprocessing}

We follow the structure from \citep{shadbahr2023impact} for running our imputation baselines and selecting train/test splits. To collect the data for MIMIC, we used the process in \citep{johnson2018mimic}, and additional steps by \citep{zhu2023fast} to convert the data to a single tabular dataset.

\subsection{Detailed Experimental Setup}
\label{sec:app_real_data_acc_setup}

For every GAM we fit (\ours{}, FastSparse, and non-L0 GAMs), we created an indicator for each of 8 quantiles (the 0.125 quantile, the 0.25 quantile, and so on). On FICO, we include missingness indicator and interaction terms as appropriate for four encodings of missingness: the three types in the original dataset (no information, no usable information, and no report available) and an added indicator which is true for any type of missingness. All other datasets contain only one type of missingness.

We fit all M-GAMs using FastSparse \citep{liu2022fast}, and in all cases set the ``max\_support\_size'' variable to 100. This prevents the algorithm from exploring models with greater than 100 non-zero coefficients. For all experiments that did not report complete sparsity versus accuracy curves, we used 5-fold cross validation to select the value for the $\ell_0$ sparsity penalty. We searched over the following set of values for $\lambda$ for each GAM: $20, 10, 5, 2, 1, 0.5, 0.4, 0.2, 0.1, 0.05, 0.02, 0.01,$ and $ 0.005$.
We optimized for AUC on Breast Cancer and accuracy on all other datasets when using cross validation. We fit all non-sparse GAM's using SKLearn's implementation of logistic regression over binned data.

We evaluated the performance of a variety of classifiers on all datasets in Section \ref{sec:real_acc}. For each datasets, we used MICE to impute 10 distinct datasets, and fit a variety of predictive models (a logistic regression, an AdaBoost model \citep{freund1997decision}, a random forest \citep{breiman2001random}, a decision tree, a shallow neural network, and an XGBoost classifier \cite{chen2016xgboost}) on these datasets. For each baseline classifier, we also provide accuracy for a model fit with and without missingness indicators added via the SMIM procedure \cite{van2023missing}. We used cross validation to select hyperparameters separately for each imputed training dataset, and ensembled the 10 models for each model class to produce a single predictive model. Cross validation was performed using 5 folds via GridSearchCV from SKLearn \citep{sklearn}, and the SKLearn implementation was used for each model class considered other than XGBoost. The hyperparameters we considered are:
\begin{itemize}
    \item Logistic regression: \{``C'':[0.01, 0.1, 1, 10], ``penalty'': (``l2''),``max\_iter'': [10,000], ``tol'': [5e-2]\}
    \item Random forest: \{``n\_estimators'':[25, 50, 100, 200], ``criterion'': [``gini'',``entropy'']\}
    \item AdaBoost: \{``n\_estimators'':[10, 25, 50, 100, 200]\}
    \item Decision tree: \{``max\_depth'':[3, 5, 7, 9, None], ``criterion'':(``gini'', ``entropy'')\}
    \item Neural Network: \{``hidden\_layer\_sizes'':[(50,), (100,), (200,), (50, 50), (50, 100), (100, 100), (100, 200), (200, 200)], ``tol'': [5e-2], ``max\_iter'': [1000]\}
    \item XGBoost: \{
            'n\_estimators':[100, 500, 1000],
            'gamma':[0, 0.1],
            'lambda':[.5, 1, 2],
            'alpha':[.5, 1, 2]
        \}
\end{itemize}
We repeat this procedure for ten distinct train-test splits for each dataset considered. 

\subsection{Computational Resources}
\label{sec:app_comp_resources}
All experiments were performed on an institutional computing cluster. All experiments that involved timing were conducted using one Tensor TXR231-1000R D126 Intel(R) Xeon(R) CPU E5-2640 v4 @ 2.40GHz (512GB RAM - 40 cores), except for MIWAE timing experiments, which use one NVIDIA Tesla P100 GPU. When runtime was not reported, experiments were run on whatever hardware was available on the cluster at that time.

\newpage
\section{Additional Experiments}
\label{sec:app_additional_experiments}

\subsection{Additional Datasets}
\label{sec:app_additional_datasets}
Throughout this section of the appendix, we will consider two new datasets in addition to the four datasets introduced in the main body of the paper (FICO, Breast Cancer, MIMIC, and Pharyngitis). We add the Chronic Kidney Disease \cite{CKD} and Heart Disease \cite{heart_disease} datasets from UCI, refered to simply as CKD and Heart Disease respectively.  CKD consists of 400 samples with 24 features, and involves prediction of chronic kidney disease using medical features. Heart Disease concerns predicting heart disease using medical and demographic features, with 303 samples and 13 features. Note that the outcome in Heart Disease is an integer between 0 and 4; we binarize this label such that we classify no heart disease (0) versus any heart disease (1, 2, 3, 4). Both of tshese datasets contain only one missingness encoding.

\subsection{Evaluation of Alternative Imputation Methods}
\label{sec:app_alt_imputation}

In the main body of this paper, we focused our evaluation of baseline classifiers on impute-then-predict using the MICE method for multiple imputation. However, a wide variety of multiple imputation methods are available. Using all six datasets, we evaluate the runtime and accuracy of four imputation methods: MICE \citep{van1999flexible}, MIWAE \citep{mattei2019miwae}, mean value imputation, and MissForest \citep{stekhoven2012missforest}. We impute ten alternative datasets for each of ten distinct train-test splits for both FICO and Breast Cancer. For a given train-test split, we fit a model from each of the model classes described in \ref{sec:app_real_data_acc_setup} for each imputed training dataset. We ensemble these ten models to produce a single predictive model per train-test split, and evaluate the test accuracy (FICO) or test AUC (Breast Cancer) of this ensembled model. 

Figure \ref{fig:app-many-baselines-accuracy} contains box-and-whiskers plots for the accuracy of each method considered across all six datasets. The imputation method does not generally have a substantial impact on the performance of the resulting impute-then-predict classifier. We also see that, across the two datasets not considered in the main body of the paper (CKD and Heart Disease) M-GAM continues to provide comparable accuracy to all baseline methods.

\begin{figure}
    \centering
    \includegraphics[height=0.93\textheight]{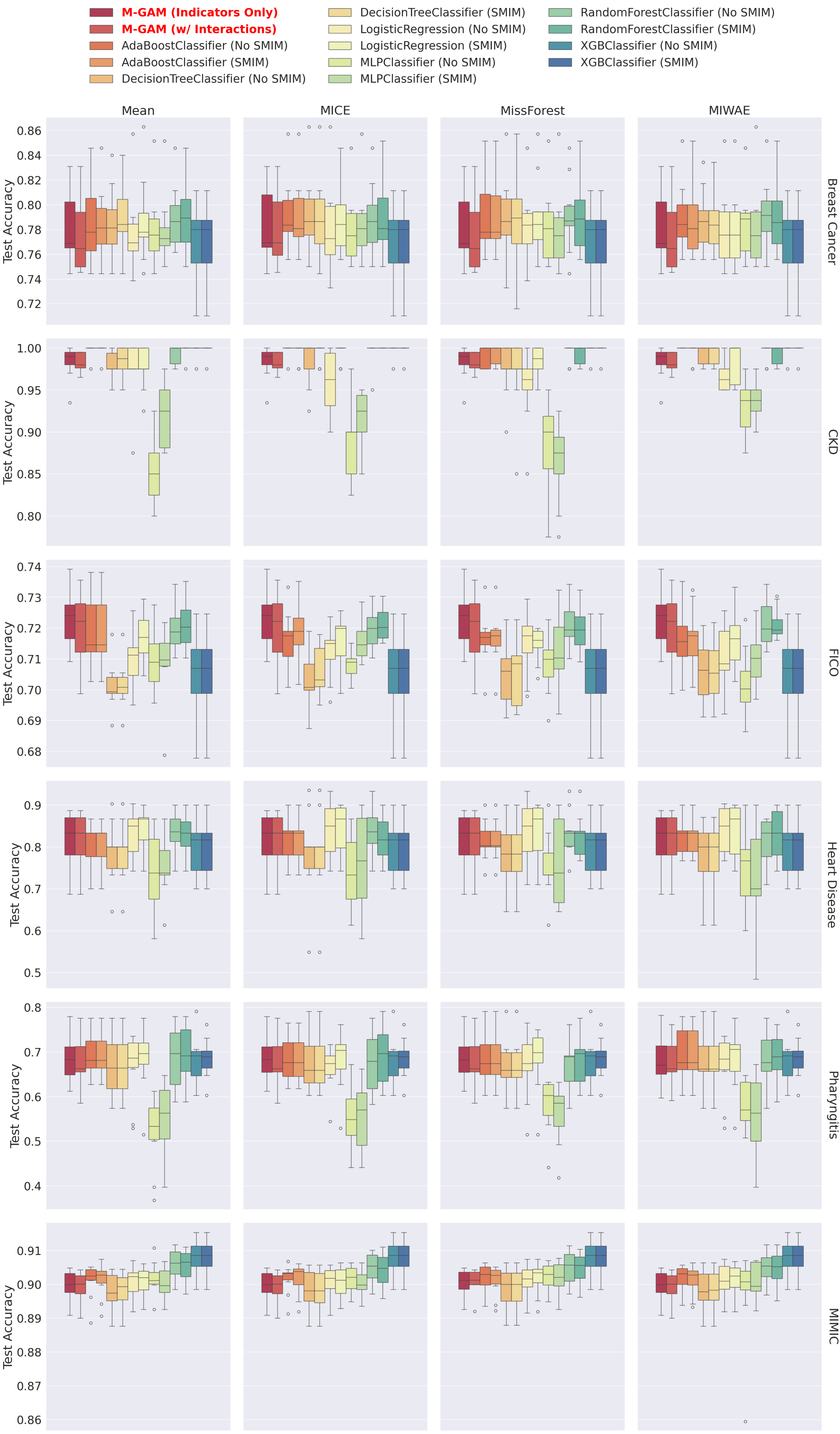}
    \caption{Test accuracy of M-GAM compared against a variety of baselines for four imputation methods and six datasets. Each column corresponds to a different imputation method, and each row to a different dataset.}
    \label{fig:app-many-baselines-accuracy}
\end{figure}

Figure \ref{fig:app-timing-all-datasets} shows the time required to produce a predictive model under each method for each dataset. As in the main body of the paper, we show the sum of the time required to impute data and the time required to produce the most accurate model.  We see that, despite resulting in models with comparable accuracy to those produced by MICE and Mean imputation, MissForest and MIWAE take longer to compute on the majority of datasets. On all six datasets datasets, M-GAM is at least an order of magnitude faster than all three multiple imputation methods, although mean imputation tends to be fastest.  

\begin{figure}
    \centering
    \includegraphics[width=1.0\textwidth]{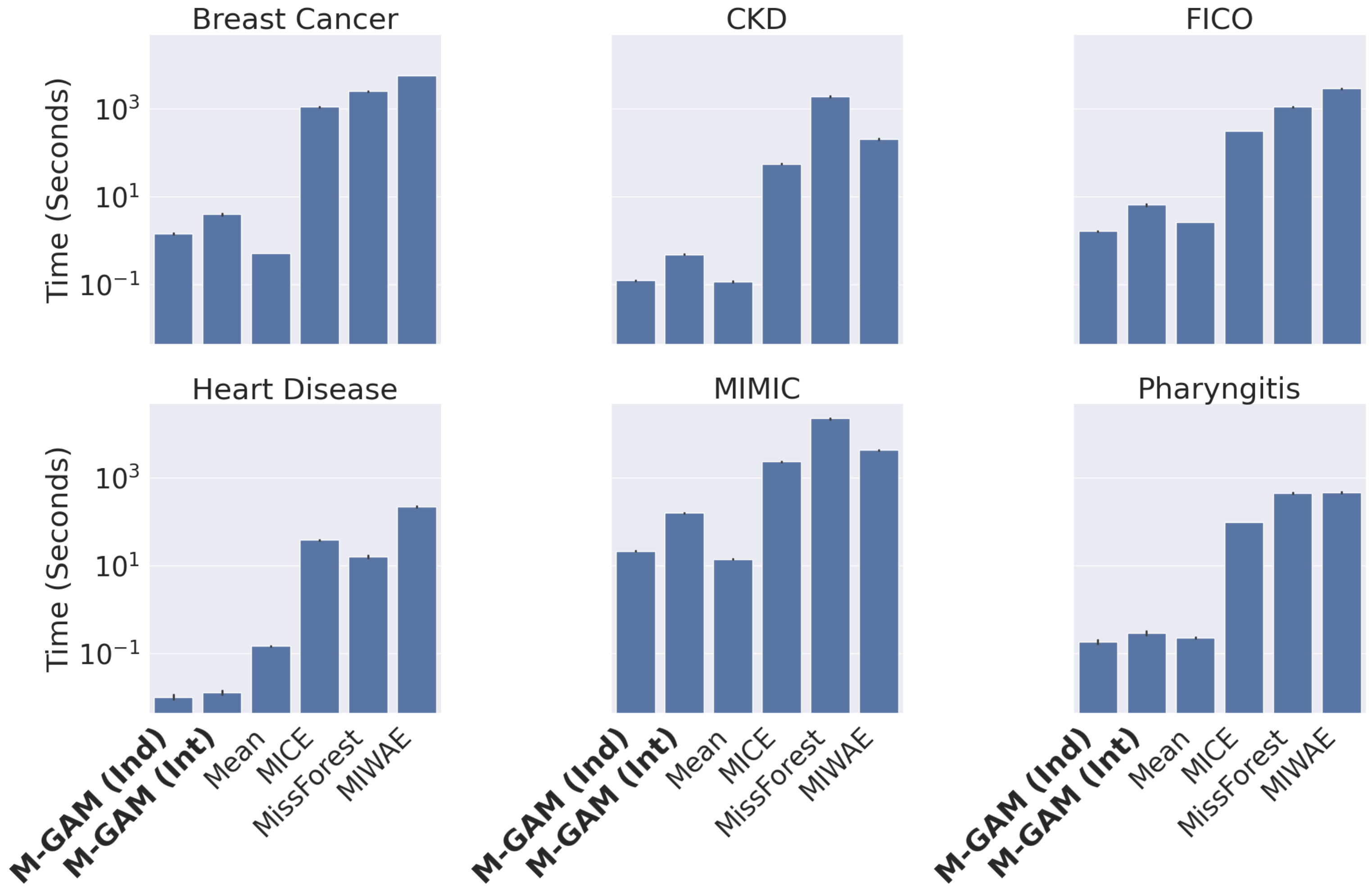}
    \caption{Runtime of different methods on Breast Cancer, CKD, FICO, Heart Disease, MIMIC, and Pharyngitis. For each imputation method, we report the total time required to impute missing data and fit the best performing impute-then-predict classifier for that dataset and imputation method. 
    M-GAM (Ind) is an M-GAM with indicators and M-GAM (Int) is an M-GAM with interaction terms. 
    Error bars report standard error of total runtime over 10 train-test splits.}
    \label{fig:app-timing-all-datasets}
\end{figure}



\subsection{Scalability of M-GAM and Imputation Methods}
\label{sec:app_scalability}
This section studies how well each method scales in terms of the number of samples in the dataset. For each dataset, we take subsamples of increasing size (25\%, 50\%, 75\%, and 100\% of samples in each dataset) and run each impute-then-predict predict procedure, as well as M-GAM over 10 distinct train-test splits. Figure \ref{fig:app-scalability-all-data} reports the total time taken to produce a model for each imputation method and M-GAM on each dataset/subsample combination. We find that M-GAM scales no worse than any of the imputation alternatives in terms of runtime.

\begin{figure}
    \centering
    \includegraphics[width=0.86\textwidth]{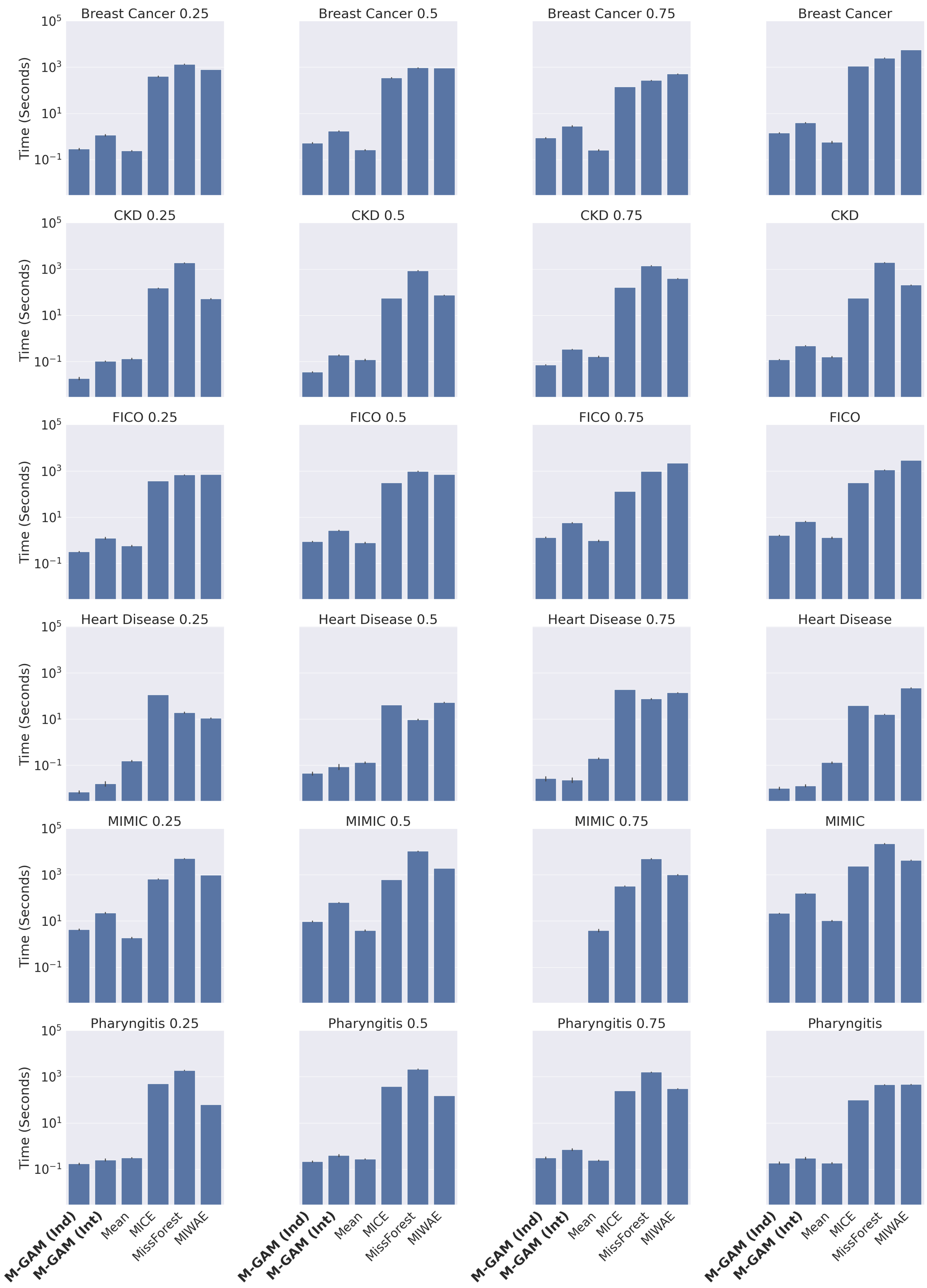}
    \caption{Runtime of different methods over 25\%, 50\%, 75\%, and all of Breast Cancer, CKD, FICO, Heart Disease, MIMIC, and Pharyngitis. For each imputation method, we report the total time required to impute missing data and fit the best performing impute-then-predict classifier for that dataset and imputation method. 
    M-GAM (Ind) is an M-GAM with indicators and M-GAM (Int) is an M-GAM with interaction terms. 
    Error bars report standard error of total runtime over 10 train-test splits.}
    \label{fig:app-scalability-all-data}
\end{figure}

\subsection{Evaluation of Different Thresholds}
\label{sec:app_different_thresholds}
\begin{figure}
    \centering
    \includegraphics[width=\textwidth]{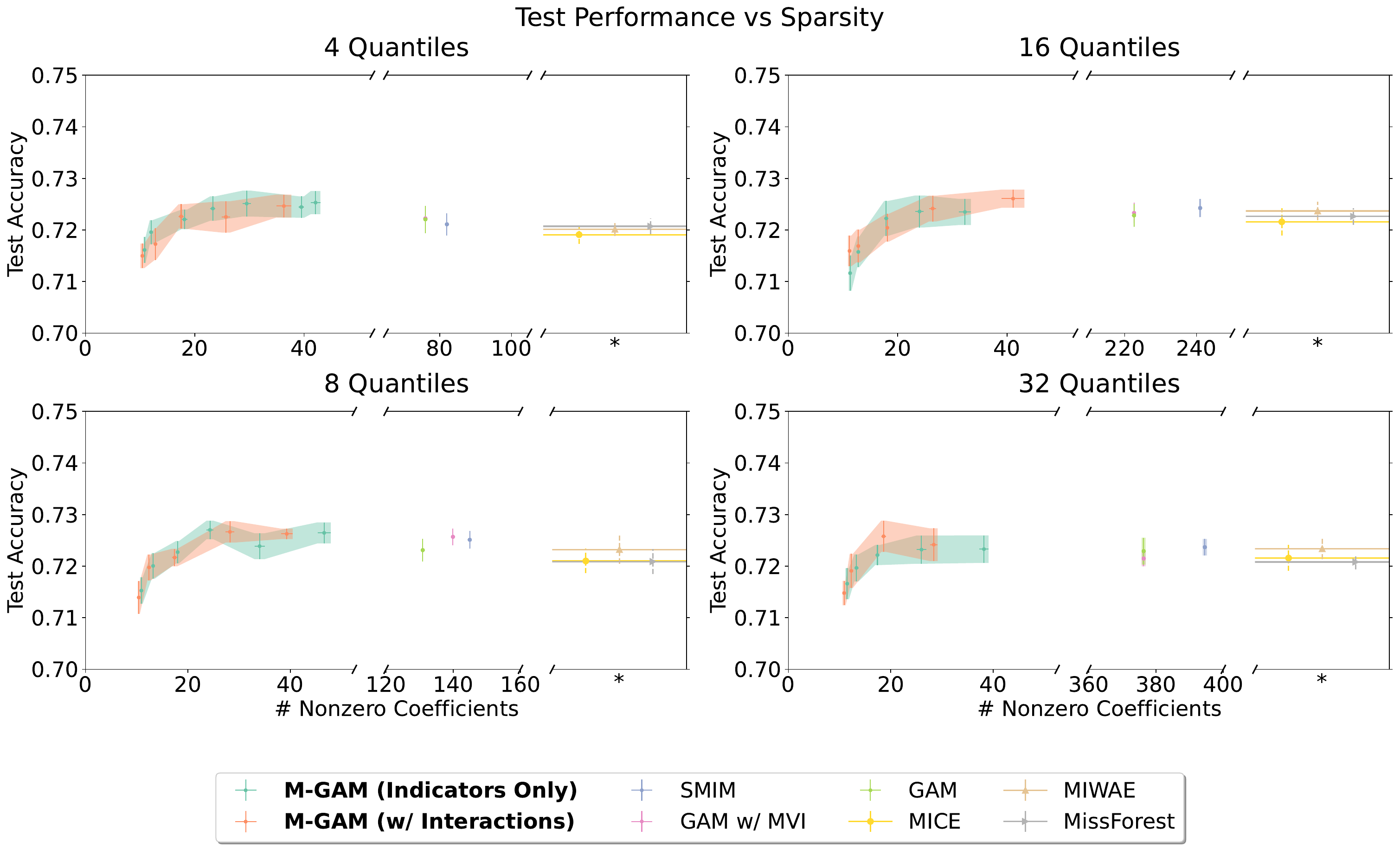}
    \caption{The sparsity versus accuracy plot for four distinct binning strategies for \ours{} on the FICO dataset.}
    \label{fig:app_diff_thresholds}
\end{figure}

Throughout the main body of this work, we reported results using 8 evenly spaced quantiles to threshold our input variables for both \ours{} and FastSparse GAMs fit on imputed data. In this section, we evaluate the sparsity versus accuracy curve for \ours{} under different binning strategies. In particular, we evaluate the performance of \ours{} on FICO with 4, 8, 16, and 32 evenly spaced quantiles. Figure \ref{fig:app_diff_thresholds} shows the results of this analysis. As the number of quantiles increased, \ours{} remained sparse despite the exploding number of interaction terms, and in fact for 32 quantiles the interaction terms lead to an especially sparse and accurate model. Beyond these observations, we found that reasonable changes to the number of thresholds we consider did not significantly impact performance.

\subsection{Evaluation of MICE with Different Numbers of Imputations}
\label{sec:app_diff_imp_count}

Since MICE is a \textit{multiple} imputation method, we needed to choose how many datasets we allow MICE to impute for each of our experiments. In this section, we evaluate the runtime versus test accuracy for models built on various numbers of imputed datasets for FICO. We evaluated each non-GAM baseling model considered in the main paper when ensembled over 1, 5, 10, 20, and 30 MICE imputed datasets. 

Figure \ref{fig:app_num_imputations} shows the accuracy versus runtime for each number of imputations. In Figure \ref{fig:app_num_imputations}, we see that there is a slight improvement in the accuracy of our classifiers when increasing from 1 to 5 imputations, but no significant performance gain for any larger numbers of imputed datasets. As such, we opted to use the moderately fast and performant choice of 10 imputations.

\begin{figure}
    \centering
    \includegraphics[width=0.8\textwidth]{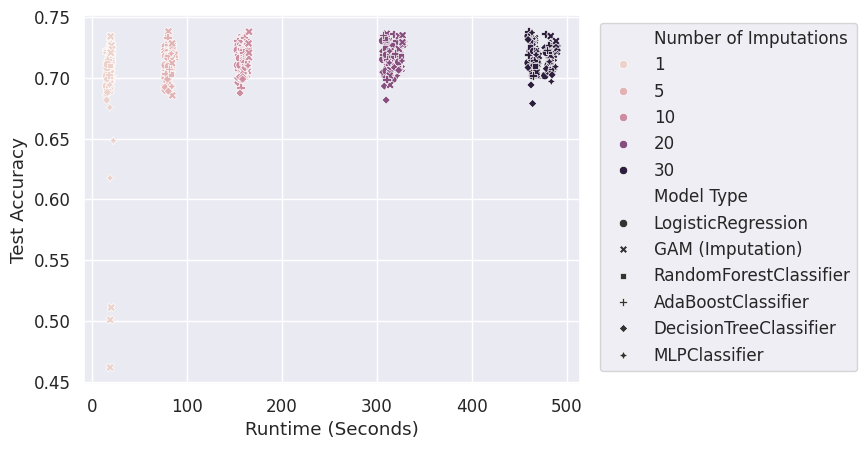}
    \caption{Test accuracy versus runtime for models built on different numbers of MICE imputed datasets. Each color represents a different number of imputed datasets, and each shape represents a different ensembled model fit on these datasets.}
    \label{fig:app_num_imputations}
\end{figure}

\subsection{Extension of Sparsity/Accuracy Results to Further Datasets}
\label{sec:app_sparsity_acc}
\begin{figure}
    \centering
    \includegraphics[width=\textwidth]{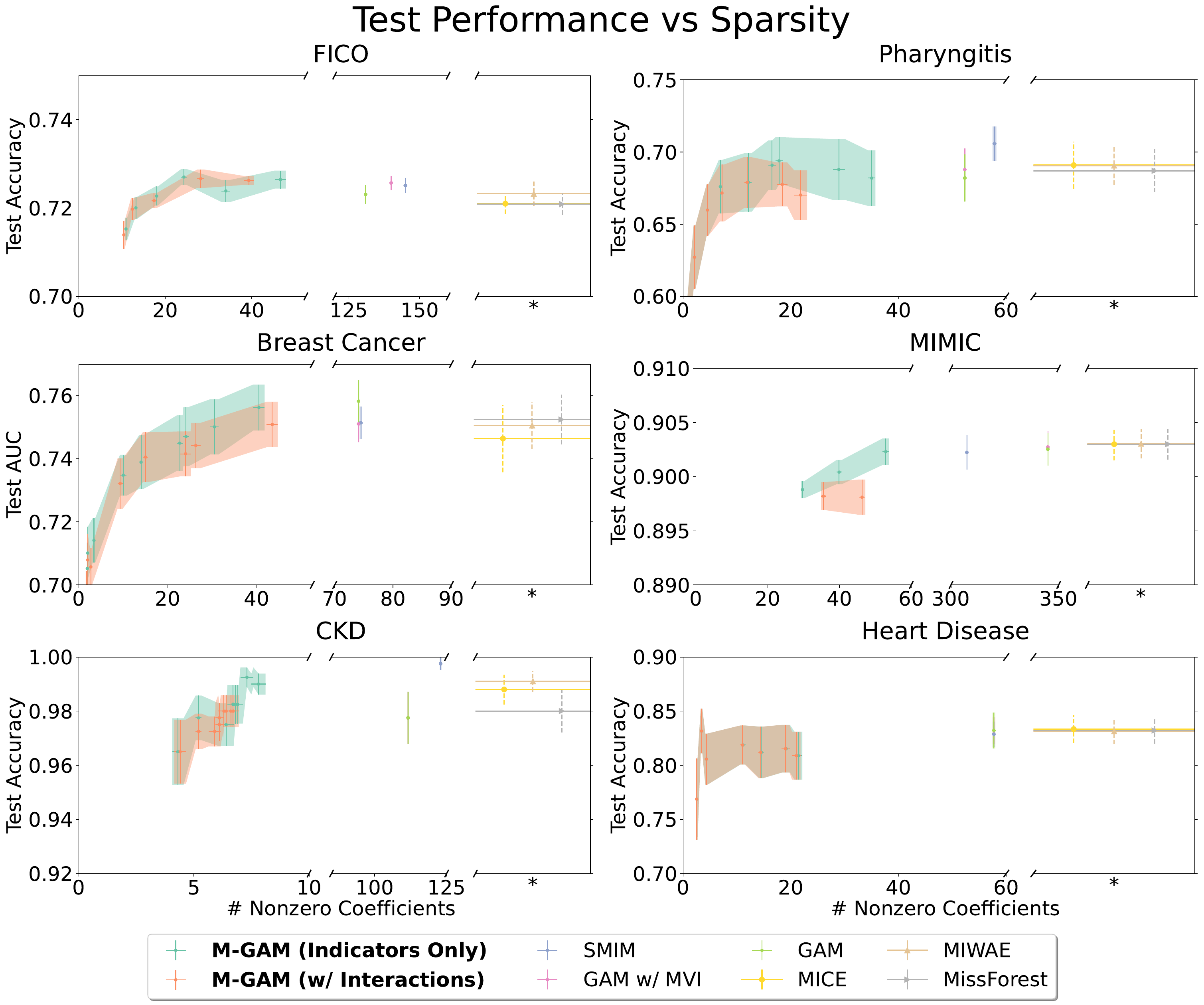}
    \caption{Test accuracy vs sparsity for M-GAMs relative to competitor GAMs on 6 datasets.}
    \label{fig:app_spacc}
\end{figure}

We focus on the FICO and Breast Cancer datasets for much of the main paper, alongside Pharyngitis and MIMIC. In Figure \ref{fig:app_spacc} we show the superset of our sparsity-accuracy results that includes the two UCI repository datasets, Heart Disease \cite{heart_disease} and CKD \cite{CKD}. 
In Figure \ref{fig:app_mar} we show the superset of our results for the data with added MAR missingness, for all 6 datasets. 

\begin{figure}
    \centering
    \includegraphics[width=\textwidth]{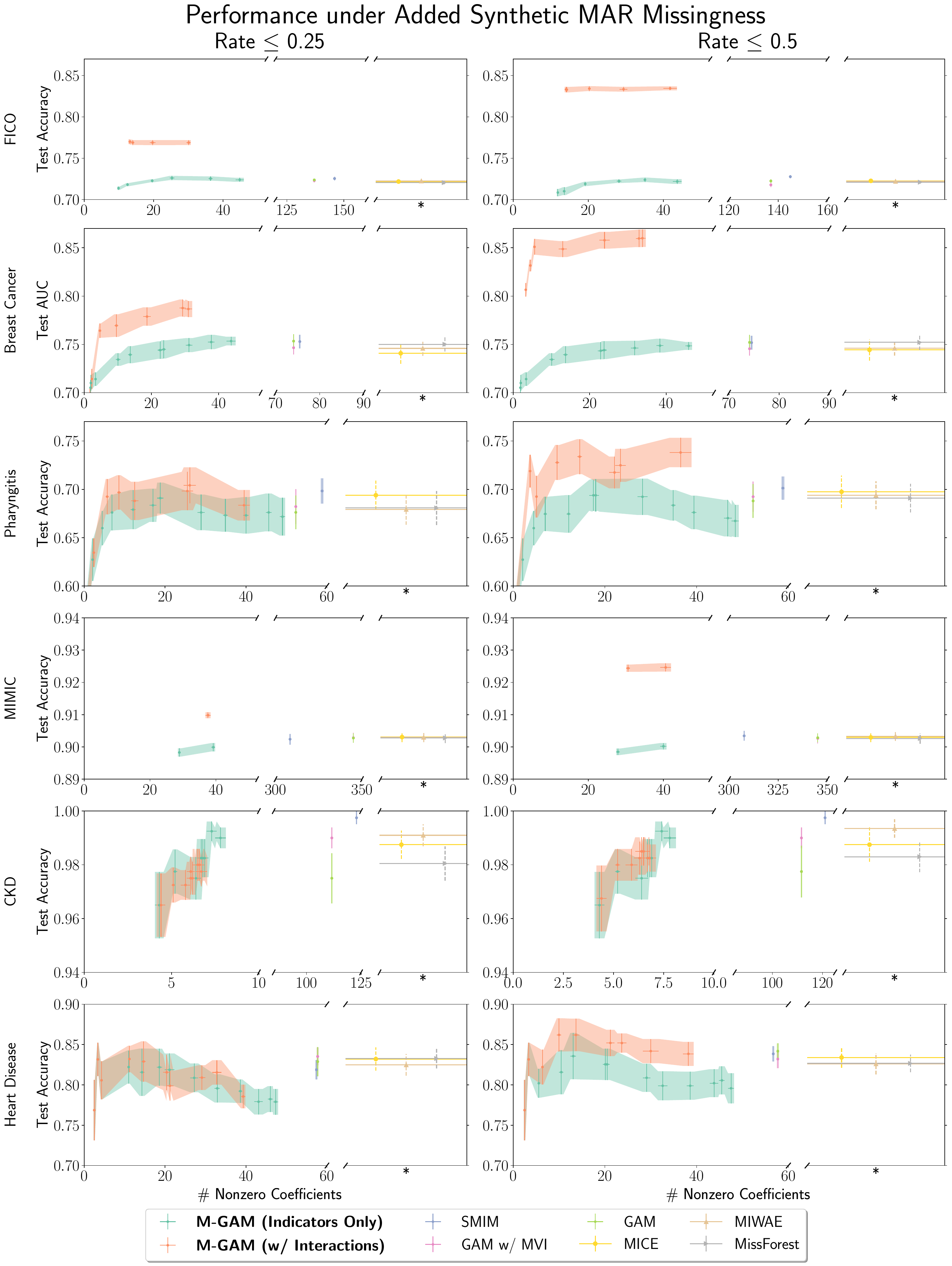}
    \caption{Test accuracy vs sparsity for M-GAMs relative to competitor GAMs on 6 datasets with added missingness.}
    \label{fig:app_mar}
\end{figure}

\subsection{Evaluation of Alternative Distinct Missingness Encodings}
\label{sec:app_distinctness}
\begin{figure}
    \centering
    \includegraphics[width=\textwidth]{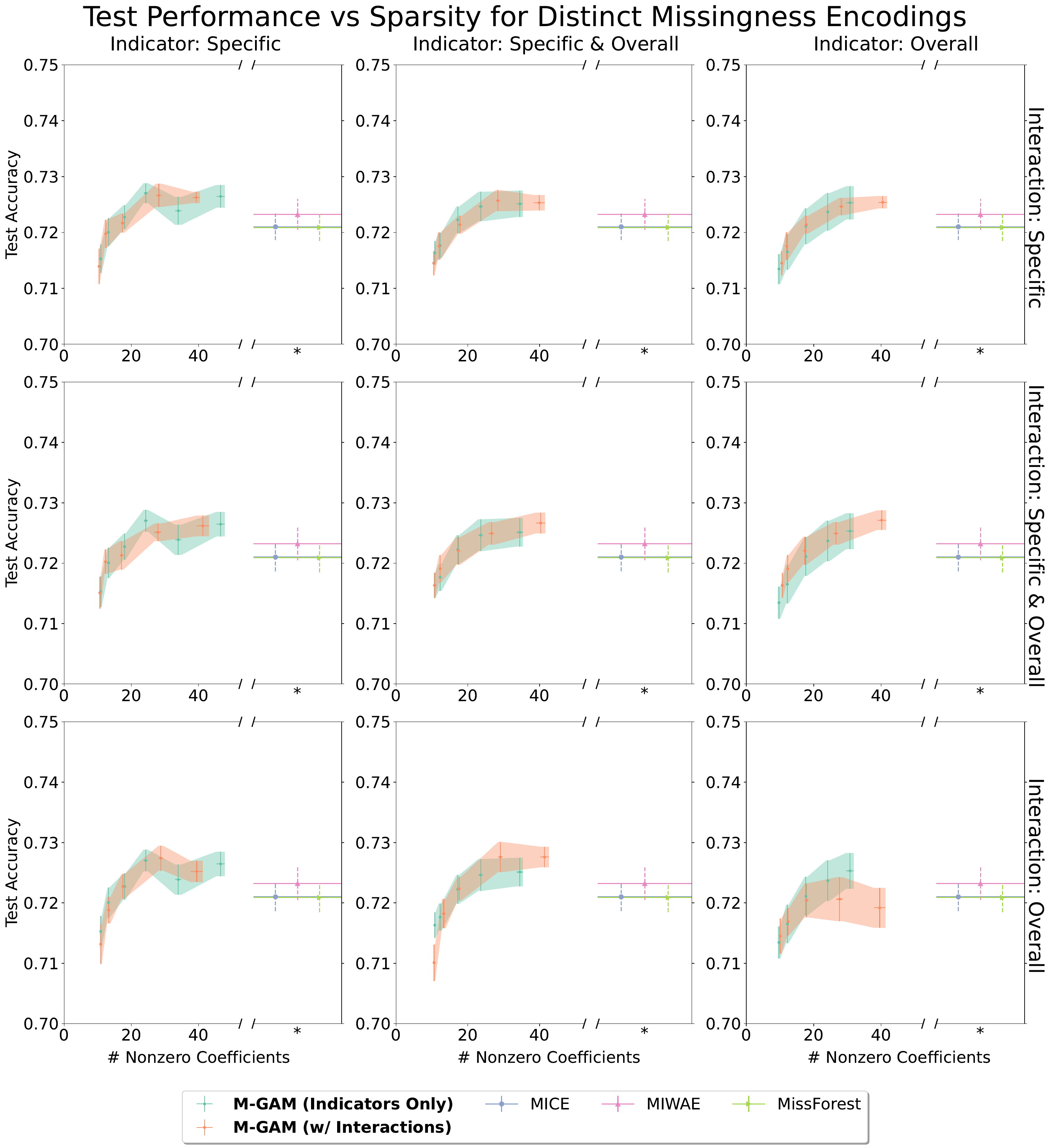}
    \caption{Results on the FICO dataset using different choices of missingness augmentation for indicators and interactions. ``Specific" refers to the distinct missingness used throughout the text. ``Overall" refers to 
    augmenting our matrix while treating missing data as all having a single missingness reason.}
    \label{fig:app_distinct}
\end{figure}

As we discuss in the main text, allowing our model to encode different reasons for missingness allows the ability to handle multiple reasons for missingness, improving the model's power. However, using only a single, overall reason for missingness could potentially allow for handling a larger set of missing data cases with fewer coefficients. These two encodings are not mutually exclusive; we could augment in both ways, as described in Equation \ref{eq:app_distinct}. 
For this reason, we investigate sparsity and test accuracy on the FICO dataset across a range of choices for whether to use distinct encodings, overall encodings, or some combination of the two. We explore a variety of combinations in Figure \ref{fig:app_distinct}, allowing nonzero values for  different subsets of $\alpha$ (specific indicators), $\alpha^{\text{overall}}$ (overall indicators), $\beta^{\text{miss}}$ (specific interactions), and $\beta^{\text{overall\_miss}}$ (overall interactions). We find no dramatic differences across these choices. 

\begin{definition} Given parameters $\alpha$, $\alpha^{\text{overall}}$, $\beta^{\text{miss}}$, $\beta^{\text{overall\_miss}}$, and $\beta$, an \ours{}  is defined as
\label{eq:app_distinct}
    \begin{align}
        g_{\text{miss}}(\mathbf{x}_i; \beta, \beta^{\text{miss}}, \beta^{\text{overall\_miss}}, \alpha, \alpha^{\text{overall}}) 
        &= \beta_0 + \sum_{j=1}^d \ h_j(x_{i,j}; \beta_j, \beta^{\text{miss}}_j, \beta^{\text{overall\_miss}}_j) \\ & + \sum_{j=1}^d\sum_{j'=1}^d h_{j,j'}(x_{i,j}, x_{i,j'}; \alpha_{j, j'}, \alpha^{\text{overall}}_{j, j'}) \nonumber , 
    \end{align}
where
    \begin{align*}
    h_{j,j'}(x_{i,j}, x_{i,j'}; \alpha_{j, j'}, \alpha^{\text{overall}}_{j, j'}) &= \sum_{m=1}^c \sum_{k=1}^{\text{len}(\mathbf{t}_j)} \alpha_{j,j',k,m} \mathbf{1}_{[\reas(x_{i,j}) = m \text{ and } x_{i,j'} \leq t_{j', k}]}    \\ &+ \sum_{k=1}^{\text{len}(\mathbf{t}_j)} \alpha^{\text{overall}}_{j,j',k} \mathbf{1}_{[\reas(x_{i,j}) \neq 0 \text{ and } x_{i,j'} \leq t_{j', k}]}     
    \end{align*}
    and 
    \begin{align*}
    h_{j}(x_{i,j}; \beta_j, \beta^{\text{miss}}_j, \beta^{\text{overall\_miss}}) &= f_j(x_{i,j};\beta_j) + \sum_{m=1}^c \beta^{\text{miss}}_{j, m} \mathbf{1}_{[\reas(x_{i,j}) = m]} \\ 
    &+ \beta^{\text{overall\_miss}}_{j} \mathbf{1}_{[\reas(x_{i,j}) \neq 0]}
    \end{align*}
\end{definition}

\newpage
\begin{figure}
    \centering
    \includegraphics[height=0.8\textheight]{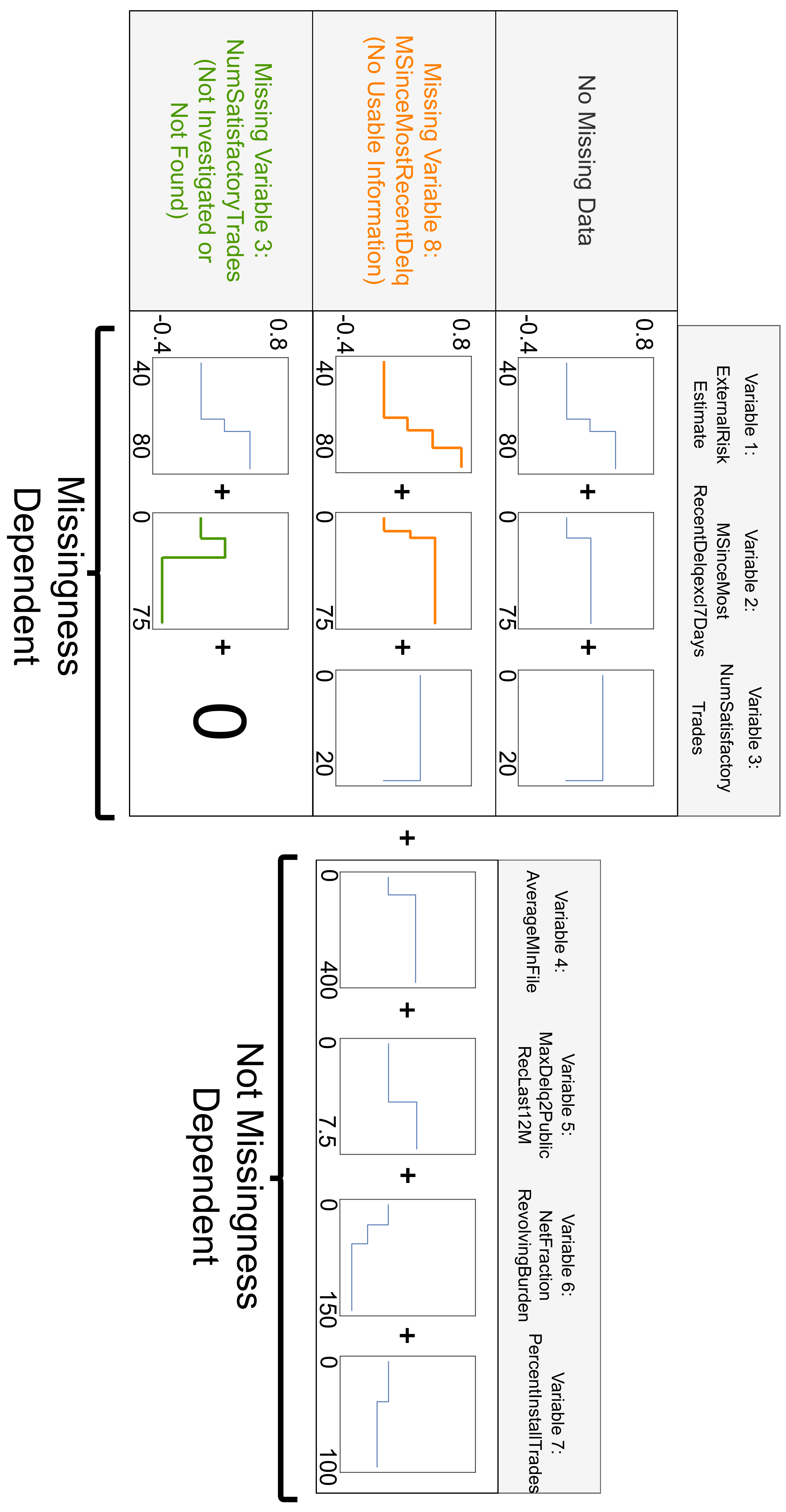}
    \caption{An expanded version of Figure 2 with variable names included. ``MSinceMostRecentDelq'' is the number of months since the individual's last delinquent payment and ``MSinceMostRecentInqexcl7day'' is the number of months since the individual's last inquiry, excluding those within the last week; all features are described in the FICO challenge \citep{competition} data documentation.}
    \label{fig:app-expanded_fig_2}
\end{figure}

\section{Additional \ours{} Visualizations}
\label{sec:app_extra_viz}

In this section, we visualize three additional \ours{}, one with and one without missingness interactions on FICO (Figures \ref{fig:app_fico_no_inter} and \ref{fig:app_fico_with_inter}, respectively), and one without missingness interactions on Breast Cancer (Figure \ref{fig:app_breca_no_inter}. These figures are best viewed digitally.

Note that several shape functions in Figure \ref{fig:app_breca_no_inter} are simply flat lines; this is because several variables in Breast Cancer (e.g., ``M Stage'' and ``Overall Patient Receptor Status Triple Negative'') are binary.

\begin{figure}
    \centering
    \includegraphics[width=0.95\textwidth]{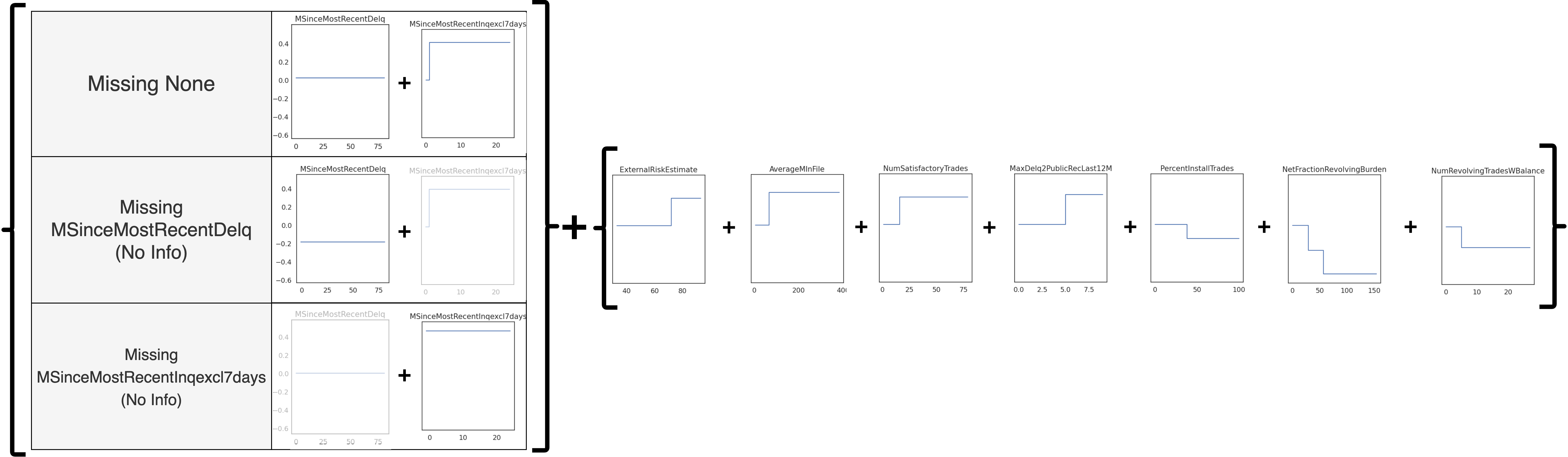}
    \caption{A visualization of a \ours{} without interaction terms on FICO. The shape functions on the left are selected based on which variables are missing, with the relevant missing variable noted to the left. The shape functions on the right are used in all cases.}
    \label{fig:app_fico_no_inter}
\end{figure}

\begin{figure}
    \centering
    \includegraphics[width=0.95\textwidth]{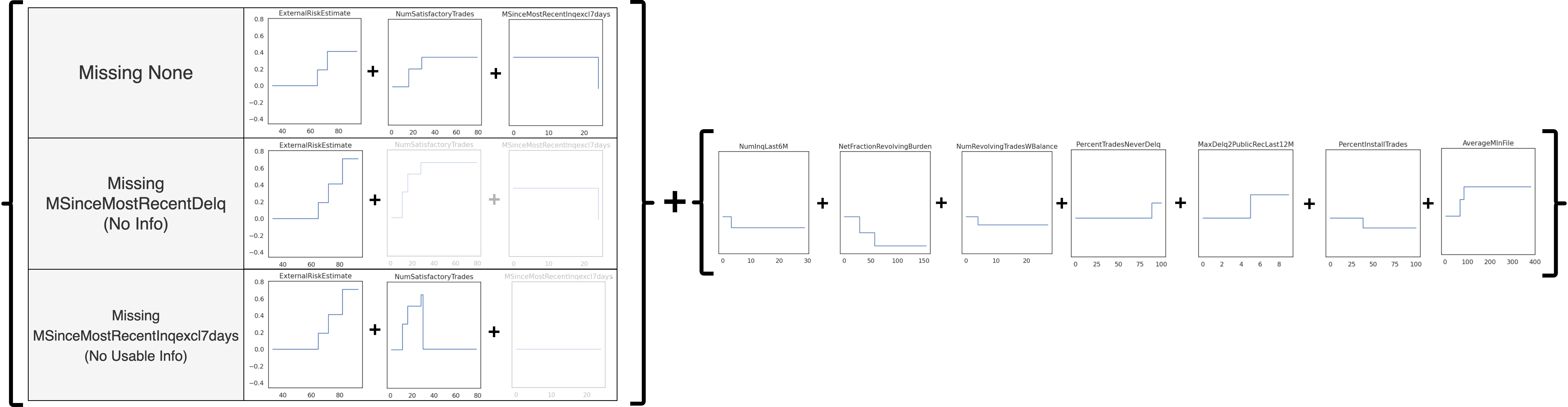}
    \caption{An additional visualization of a \ours{} with interaction terms on FICO. The shape functions within the left set of brackets are selected based on which variables are missing, with the relevant missing variable noted to the left. The shape functions in the right set of brackets are applied in all cases.}
    \label{fig:app_fico_with_inter}
\end{figure}

\begin{figure}
    \centering
    \includegraphics[width=0.95\textwidth]{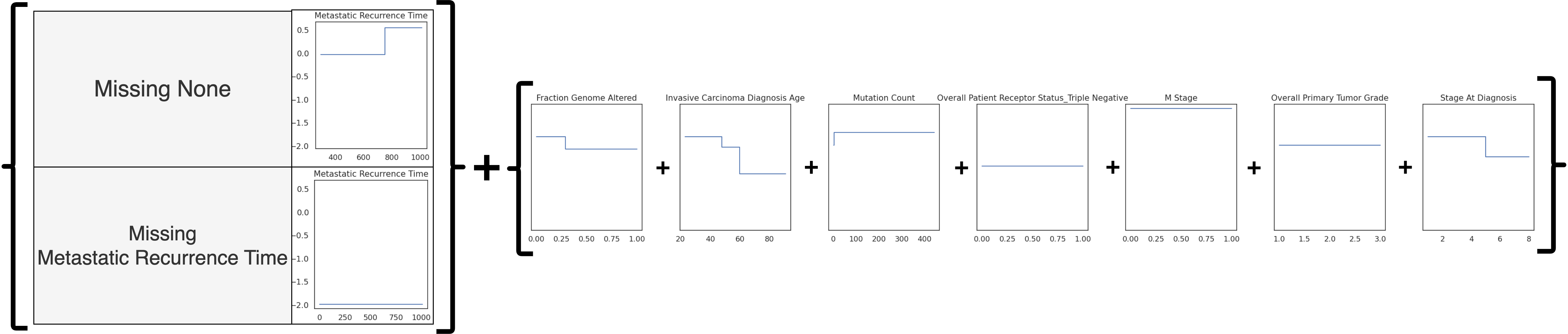}
    \caption{A visualization of a \ours{} without interaction terms on Breast Cancer. The shape functions within the left set of brackets are selected based on which variables are missing, with the relevant missing variable noted to the left. The shape functions in the right set of brackets are applied in all cases.}
    \label{fig:app_breca_no_inter}
\end{figure}

\section{License Information for Used Assets}
In this section, we provide license information for every external asset used in this paper. The Breast Cancer, CKD, Heart Disease, and Pharyngitis datasets are available under creative commons. FICO is used under its own license, the details of which can be found here: \url{https://community.fico.com/s/explainable-machine-learning-challenge?tabset-158d9=2}. MIMIC-III is available under the MIT license. We use code from \citet{liu2022fast} under the MIT license, and the code from \citet{shadbahr2023impact} under BSD 3-Clause.

\clearpage
\newpage
\section*{NeurIPS Paper Checklist}

\begin{enumerate}

\item {\bf Claims}
    \item[] Question: Do the main claims made in the abstract and introduction accurately reflect the paper's contributions and scope?
    \item[] Answer: \answerYes{} 
    \item[] Justification: The primary claims made in the abstract and introduction are that our method improves upon sparsity without sacrificing accuracy. We also claim that we improve performance in synthetic cases, and do not substantially harm runtime. All of these claims are directly supported by an experimental section.
    \item[] Guidelines:
    \begin{itemize}
        \item The answer NA means that the abstract and introduction do not include the claims made in the paper.
        \item The abstract and/or introduction should clearly state the claims made, including the contributions made in the paper and important assumptions and limitations. A No or NA answer to this question will not be perceived well by the reviewers. 
        \item The claims made should match theoretical and experimental results, and reflect how much the results can be expected to generalize to other settings. 
        \item It is fine to include aspirational goals as motivation as long as it is clear that these goals are not attained by the paper. 
    \end{itemize}

\item {\bf Limitations}
    \item[] Question: Does the paper discuss the limitations of the work performed by the authors?
    \item[] Answer: \answerYes{} 
    \item[] Justification: We discuss that the $\ell_0$ penalty we impose does not perfectly impose the prior we would like, and that missing data methods as a whole are vulnerable to distribution shift.
    \item[] Guidelines:
    \begin{itemize}
        \item The answer NA means that the paper has no limitation while the answer No means that the paper has limitations, but those are not discussed in the paper. 
        \item The authors are encouraged to create a separate "Limitations" section in their paper.
        \item The paper should point out any strong assumptions and how robust the results are to violations of these assumptions (e.g., independence assumptions, noiseless settings, model well-specification, asymptotic approximations only holding locally). The authors should reflect on how these assumptions might be violated in practice and what the implications would be.
        \item The authors should reflect on the scope of the claims made, e.g., if the approach was only tested on a few datasets or with a few runs. In general, empirical results often depend on implicit assumptions, which should be articulated.
        \item The authors should reflect on the factors that influence the performance of the approach. For example, a facial recognition algorithm may perform poorly when image resolution is low or images are taken in low lighting. Or a speech-to-text system might not be used reliably to provide closed captions for online lectures because it fails to handle technical jargon.
        \item The authors should discuss the computational efficiency of the proposed algorithms and how they scale with dataset size.
        \item If applicable, the authors should discuss possible limitations of their approach to address problems of privacy and fairness.
        \item While the authors might fear that complete honesty about limitations might be used by reviewers as grounds for rejection, a worse outcome might be that reviewers discover limitations that aren't acknowledged in the paper. The authors should use their best judgment and recognize that individual actions in favor of transparency play an important role in developing norms that preserve the integrity of the community. Reviewers will be specifically instructed to not penalize honesty concerning limitations.
    \end{itemize}

\item {\bf Theory Assumptions and Proofs}
    \item[] Question: For each theoretical result, does the paper provide the full set of assumptions and a complete (and correct) proof?
    \item[] Answer: \answerYes{}{} 
    \item[] Justification: A complete proof for each theoretical claim in the paper can be found in the appendix.
    \item[] Guidelines:
    \begin{itemize}
        \item The answer NA means that the paper does not include theoretical results. 
        \item All the theorems, formulas, and proofs in the paper should be numbered and cross-referenced.
        \item All assumptions should be clearly stated or referenced in the statement of any theorems.
        \item The proofs can either appear in the main paper or the supplemental material, but if they appear in the supplemental material, the authors are encouraged to provide a short proof sketch to provide intuition. 
        \item Inversely, any informal proof provided in the core of the paper should be complemented by formal proofs provided in appendix or supplemental material.
        \item Theorems and Lemmas that the proof relies upon should be properly referenced. 
    \end{itemize}

    \item {\bf Experimental Result Reproducibility}
    \item[] Question: Does the paper fully disclose all the information needed to reproduce the main experimental results of the paper to the extent that it affects the main claims and/or conclusions of the paper (regardless of whether the code and data are provided or not)?
    \item[] Answer: \answerYes{} 
    \item[] Justification: A thorough description of our experimental framework is provided in the appendix. We also provide references to each dataset we use, and will release the code upon publication.
    \item[] Guidelines:
    \begin{itemize}
        \item The answer NA means that the paper does not include experiments.
        \item If the paper includes experiments, a No answer to this question will not be perceived well by the reviewers: Making the paper reproducible is important, regardless of whether the code and data are provided or not.
        \item If the contribution is a dataset and/or model, the authors should describe the steps taken to make their results reproducible or verifiable. 
        \item Depending on the contribution, reproducibility can be accomplished in various ways. For example, if the contribution is a novel architecture, describing the architecture fully might suffice, or if the contribution is a specific model and empirical evaluation, it may be necessary to either make it possible for others to replicate the model with the same dataset, or provide access to the model. In general. releasing code and data is often one good way to accomplish this, but reproducibility can also be provided via detailed instructions for how to replicate the results, access to a hosted model (e.g., in the case of a large language model), releasing of a model checkpoint, or other means that are appropriate to the research performed.
        \item While NeurIPS does not require releasing code, the conference does require all submissions to provide some reasonable avenue for reproducibility, which may depend on the nature of the contribution. For example
        \begin{enumerate}
            \item If the contribution is primarily a new algorithm, the paper should make it clear how to reproduce that algorithm.
            \item If the contribution is primarily a new model architecture, the paper should describe the architecture clearly and fully.
            \item If the contribution is a new model (e.g., a large language model), then there should either be a way to access this model for reproducing the results or a way to reproduce the model (e.g., with an open-source dataset or instructions for how to construct the dataset).
            \item We recognize that reproducibility may be tricky in some cases, in which case authors are welcome to describe the particular way they provide for reproducibility. In the case of closed-source models, it may be that access to the model is limited in some way (e.g., to registered users), but it should be possible for other researchers to have some path to reproducing or verifying the results.
        \end{enumerate}
    \end{itemize}

\item {\bf Open access to data and code}
    \item[] Question: Does the paper provide open access to the data and code, with sufficient instructions to faithfully reproduce the main experimental results, as described in supplemental material?
    \item[] Answer: \answerYes{} 
    \item[] All datasets used are openly accessible (although MIMIC-III requires an application process) and code will be released. We also thoroughly describe each experimental setup in the appendix.
    \item[] Guidelines:
    \begin{itemize}
        \item The answer NA means that paper does not include experiments requiring code.
        \item Please see the NeurIPS code and data submission guidelines (\url{https://nips.cc/public/guides/CodeSubmissionPolicy}) for more details.
        \item While we encourage the release of code and data, we understand that this might not be possible, so “No” is an acceptable answer. Papers cannot be rejected simply for not including code, unless this is central to the contribution (e.g., for a new open-source benchmark).
        \item The instructions should contain the exact command and environment needed to run to reproduce the results. See the NeurIPS code and data submission guidelines (\url{https://nips.cc/public/guides/CodeSubmissionPolicy}) for more details.
        \item The authors should provide instructions on data access and preparation, including how to access the raw data, preprocessed data, intermediate data, and generated data, etc.
        \item The authors should provide scripts to reproduce all experimental results for the new proposed method and baselines. If only a subset of experiments are reproducible, they should state which ones are omitted from the script and why.
        \item At submission time, to preserve anonymity, the authors should release anonymized versions (if applicable).
        \item Providing as much information as possible in supplemental material (appended to the paper) is recommended, but including URLs to data and code is permitted.
    \end{itemize}

\item {\bf Experimental Setting/Details}
    \item[] Question: Does the paper specify all the training and test details (e.g., data splits, hyperparameters, how they were chosen, type of optimizer, etc.) necessary to understand the results?
    \item[] Answer: \answerYes{} 
    \item[] The procedure and random seed used to split data, all hyperparameters, selection methods, and implementation details are provided in the code.
    \item[] Guidelines:
    \begin{itemize}
        \item The answer NA means that the paper does not include experiments.
        \item The experimental setting should be presented in the core of the paper to a level of detail that is necessary to appreciate the results and make sense of them.
        \item The full details can be provided either with the code, in appendix, or as supplemental material.
    \end{itemize}

\item {\bf Experiment Statistical Significance}
    \item[] Question: Does the paper report error bars suitably and correctly defined or other appropriate information about the statistical significance of the experiments?
    \item[] Answer: \answerYes{}{} 
    \item[] All experimental figures include uncertainty quantification through error bars.
    \item[] Guidelines:
    \begin{itemize}
        \item The answer NA means that the paper does not include experiments.
        \item The authors should answer "Yes" if the results are accompanied by error bars, confidence intervals, or statistical significance tests, at least for the experiments that support the main claims of the paper.
        \item The factors of variability that the error bars are capturing should be clearly stated (for example, train/test split, initialization, random drawing of some parameter, or overall run with given experimental conditions).
        \item The method for calculating the error bars should be explained (closed form formula, call to a library function, bootstrap, etc.)
        \item The assumptions made should be given (e.g., Normally distributed errors).
        \item It should be clear whether the error bar is the standard deviation or the standard error of the mean.
        \item It is OK to report 1-sigma error bars, but one should state it. The authors should preferably report a 2-sigma error bar than state that they have a 96\% CI, if the hypothesis of Normality of errors is not verified.
        \item For asymmetric distributions, the authors should be careful not to show in tables or figures symmetric error bars that would yield results that are out of range (e.g. negative error rates).
        \item If error bars are reported in tables or plots, The authors should explain in the text how they were calculated and reference the corresponding figures or tables in the text.
    \end{itemize}

\item {\bf Experiments Compute Resources}
    \item[] Question: For each experiment, does the paper provide sufficient information on the computer resources (type of compute workers, memory, time of execution) needed to reproduce the experiments?
    \item[] Answer: \answerYes{} 
    \item[] The appendix provides details around all computational hardware used. Timing experiments are provided throughout the paper.
    \item[] Guidelines:
    \begin{itemize}
        \item The answer NA means that the paper does not include experiments.
        \item The paper should indicate the type of compute workers CPU or GPU, internal cluster, or cloud provider, including relevant memory and storage.
        \item The paper should provide the amount of compute required for each of the individual experimental runs as well as estimate the total compute. 
        \item The paper should disclose whether the full research project required more compute than the experiments reported in the paper (e.g., preliminary or failed experiments that didn't make it into the paper). 
    \end{itemize}
    
\item {\bf Code Of Ethics}
    \item[] Question: Does the research conducted in the paper conform, in every respect, with the NeurIPS Code of Ethics \url{https://neurips.cc/public/EthicsGuidelines}?
    \item[] Answer: \answerYes{} 
    \item[] We have reviewed and are in compliance with the code of ethics.
    \item[] Guidelines:
    \begin{itemize}
        \item The answer NA means that the authors have not reviewed the NeurIPS Code of Ethics.
        \item If the authors answer No, they should explain the special circumstances that require a deviation from the Code of Ethics.
        \item The authors should make sure to preserve anonymity (e.g., if there is a special consideration due to laws or regulations in their jurisdiction).
    \end{itemize}

\item {\bf Broader Impacts}
    \item[] Question: Does the paper discuss both potential positive societal impacts and negative societal impacts of the work performed?
    \item[] Answer: \answerYes{}{} 
    \item[] We discuss the broader impacts of this work in the conclusion.
    \item[] Guidelines:
    \begin{itemize}
        \item The answer NA means that there is no societal impact of the work performed.
        \item If the authors answer NA or No, they should explain why their work has no societal impact or why the paper does not address societal impact.
        \item Examples of negative societal impacts include potential malicious or unintended uses (e.g., disinformation, generating fake profiles, surveillance), fairness considerations (e.g., deployment of technologies that could make decisions that unfairly impact specific groups), privacy considerations, and security considerations.
        \item The conference expects that many papers will be foundational research and not tied to particular applications, let alone deployments. However, if there is a direct path to any negative applications, the authors should point it out. For example, it is legitimate to point out that an improvement in the quality of generative models could be used to generate deepfakes for disinformation. On the other hand, it is not needed to point out that a generic algorithm for optimizing neural networks could enable people to train models that generate Deepfakes faster.
        \item The authors should consider possible harms that could arise when the technology is being used as intended and functioning correctly, harms that could arise when the technology is being used as intended but gives incorrect results, and harms following from (intentional or unintentional) misuse of the technology.
        \item If there are negative societal impacts, the authors could also discuss possible mitigation strategies (e.g., gated release of models, providing defenses in addition to attacks, mechanisms for monitoring misuse, mechanisms to monitor how a system learns from feedback over time, improving the efficiency and accessibility of ML).
    \end{itemize}
    
\item {\bf Safeguards}
    \item[] Question: Does the paper describe safeguards that have been put in place for responsible release of data or models that have a high risk for misuse (e.g., pretrained language models, image generators, or scraped datasets)?
    \item[] Answer: \answerNA{}{} 
    \item[] This work poses no substantial risks for misuse, beyond those associated with any predictive model.
    \item[] Guidelines:
    \begin{itemize}
        \item The answer NA means that the paper poses no such risks.
        \item Released models that have a high risk for misuse or dual-use should be released with necessary safeguards to allow for controlled use of the model, for example by requiring that users adhere to usage guidelines or restrictions to access the model or implementing safety filters. 
        \item Datasets that have been scraped from the Internet could pose safety risks. The authors should describe how they avoided releasing unsafe images.
        \item We recognize that providing effective safeguards is challenging, and many papers do not require this, but we encourage authors to take this into account and make a best faith effort.
    \end{itemize}

\item {\bf Licenses for existing assets}
    \item[] Question: Are the creators or original owners of assets (e.g., code, data, models), used in the paper, properly credited and are the license and terms of use explicitly mentioned and properly respected?
    \item[] Answer: \answerYes{} 
    \item[] The license under which each asset is used is given in the appendix, and we comply with the license for each asset.
    \item[] Guidelines:
    \begin{itemize}
        \item The answer NA means that the paper does not use existing assets.
        \item The authors should cite the original paper that produced the code package or dataset.
        \item The authors should state which version of the asset is used and, if possible, include a URL.
        \item The name of the license (e.g., CC-BY 4.0) should be included for each asset.
        \item For scraped data from a particular source (e.g., website), the copyright and terms of service of that source should be provided.
        \item If assets are released, the license, copyright information, and terms of use in the package should be provided. For popular datasets, \url{paperswithcode.com/datasets} has curated licenses for some datasets. Their licensing guide can help determine the license of a dataset.
        \item For existing datasets that are re-packaged, both the original license and the license of the derived asset (if it has changed) should be provided.
        \item If this information is not available online, the authors are encouraged to reach out to the asset's creators.
    \end{itemize}

\item {\bf New Assets}
    \item[] Question: Are new assets introduced in the paper well documented and is the documentation provided alongside the assets?
    \item[] Answer: \answerYes{} 
    \item[] Complete code associated with our framework and experiments will be released.
    \item[] Guidelines:
    \begin{itemize}
        \item The answer NA means that the paper does not release new assets.
        \item Researchers should communicate the details of the dataset/code/model as part of their submissions via structured templates. This includes details about training, license, limitations, etc. 
        \item The paper should discuss whether and how consent was obtained from people whose asset is used.
        \item At submission time, remember to anonymize your assets (if applicable). You can either create an anonymized URL or include an anonymized zip file.
    \end{itemize}

\item {\bf Crowdsourcing and Research with Human Subjects}
    \item[] Question: For crowdsourcing experiments and research with human subjects, does the paper include the full text of instructions given to participants and screenshots, if applicable, as well as details about compensation (if any)? 
    \item[] Answer: \answerNA{} 
    \item[] Neither crowdsourced research nor research with human subjects was conducted.
    \item[] Guidelines:
    \begin{itemize}
        \item The answer NA means that the paper does not involve crowdsourcing nor research with human subjects.
        \item Including this information in the supplemental material is fine, but if the main contribution of the paper involves human subjects, then as much detail as possible should be included in the main paper. 
        \item According to the NeurIPS Code of Ethics, workers involved in data collection, curation, or other labor should be paid at least the minimum wage in the country of the data collector. 
    \end{itemize}

\item {\bf Institutional Review Board (IRB) Approvals or Equivalent for Research with Human Subjects}
    \item[] Question: Does the paper describe potential risks incurred by study participants, whether such risks were disclosed to the subjects, and whether Institutional Review Board (IRB) approvals (or an equivalent approval/review based on the requirements of your country or institution) were obtained?
    \item[] Answer: \answerNA{} 
    \item[] No such research was involved in this work.
    \item[] Guidelines:
    \begin{itemize}
        \item The answer NA means that the paper does not involve crowdsourcing nor research with human subjects.
        \item Depending on the country in which research is conducted, IRB approval (or equivalent) may be required for any human subjects research. If you obtained IRB approval, you should clearly state this in the paper. 
        \item We recognize that the procedures for this may vary significantly between institutions and locations, and we expect authors to adhere to the NeurIPS Code of Ethics and the guidelines for their institution. 
        \item For initial submissions, do not include any information that would break anonymity (if applicable), such as the institution conducting the review.
    \end{itemize}

\end{enumerate}

\end{document}